\providecommand{\tabularnewline}{\\}
\theoremstyle{plain}
\newtheorem{thm}{\protect\theoremname}
\theoremstyle{definition}
\newtheorem{defn}[thm]{\protect\definitionname}
\providecommand{\definitionname}{Definition}
\providecommand{\theoremname}{Theorem}
\providecommand{\definitionname}{Definition}
\providecommand{\theoremname}{Theorem}
\begin{document}
\title{Numerical Methods For PDEs Over Manifolds Using Spectral Physics Informed
Neural Networks}
\author{Yuval Zelig \and Shai Dekel}
\date{School of Mathematical Sciences, Tel Aviv University \\
 \today}

\maketitle
 
\begin{abstract}
We introduce an approach for solving PDEs over manifolds using physics
informed neural networks whose architecture aligns with spectral methods.
The networks are trained to take in as input samples of an initial
condition, any time stamp and any point(s) on the manifold and then
output the solution's value at the given time and point(s). We provide
proofs of our method for the heat equation on the interval and examples
of unique network architectures that are adapted to nonlinear equations
on the sphere and the torus. We also show that our spectral-inspired
neural network architectures outperform the standard physics informed
architectures. Our extensive experimental results include generalization
studies where the testing dataset of initial conditions is randomly
sampled from a significantly larger space than the training set. 
\end{abstract}

\section{Introduction}

Time dependent differential equations are a basic tool for understanding
many processes in physics, chemistry, biology, economy and other fields.
Therefore, solving those equations is an active area of research \cite{key-1,key-2}.
For many of those equations, an analytical solution does not exist
and a numerical method must be used. Numerical methods such as finite
differences and finite elements methods are applied successfully in
many scenarios, however there remain many challenges. One still cannot
seamlessly incorporate noisy data into these algorithms, mesh generation
is complex, especially for the case of manifolds and solving high
dimensional problems governed by parameterized PDEs is sometimes out
of reach.

In recent years, there is an emergence of machine learning methods
and most notably Physics Informed (PI) deep learning models \cite{key-4},\cite{key-21}
that present an attractive alternative to the classical numerical
methods. PI machine learning allows to integrate seamlessly data and
mathematical physics models, even in partially understood, uncertain
and high dimensional contexts. Making a learning algorithm physics
informed amounts to introducing appropriate observational, inductive
or learning biases that can steer the learning process towards identifying
physically consistent solutions. In this work we are focused on Physics
Informed Neural Networks (PINN) that are designed to solve Partial
Differential Equations (PDEs) by enforcing the networks to approximately
obey the given governing equations. This can be achieved by applying
loss functions corresponding to the equations during the networks'
training phase. This technique allows to obtain relatively high quality
approximation without the need of ground truth data. There are various
neural network architectures that have been developed for this purpose,
with different settings and strategies such as automation differentiation
\cite{key-19}, numerical schemes \cite{key-5}, grid-free \cite{key-3,key-4}
or grid-dependent approaches \cite{key-5}, and the ability to handle
different geometries \cite{key-20}. In this paper, we present a generalization
of spectral based deep learning methods for PDEs \cite{key-24,key-25,key-26,key-27}: 
\begin{itemize}
\item[(i)] The architecture of our PINNs is guided by the paradigm of spectral
approximation over compact Riemannian manifolds, where on each manifold
we use the corresponding eigenfunction basis of the Laplace-Beltrami
operator. Introducing concepts from the theory of harmonic analysis
on manifolds to deep learning is an ongoing active research domain
\cite{key-16,key-22,MQS}. As we shall see, in our PDE applications,
this allows to construct neural networks that provide higher accuracy
using less parameters when compared with standard PINN architectures. 
\item[(ii)] Typically, PINNs need to be re-trained for each given initial condition,
whereas our approach can be considered an adaptation of PINNs to operator
learning \cite{LuLu,MQS}. It is en par with operator learning, as
it allows the network to take in as input any initial condition from
a fixed subspace of initial conditions over the manifold and output
the approximation to the PDE at a given point $x$ and time $t$.
However, the main advantage of the PI approach is that PINNs use the
PDE to construct the loss function for the training phase and so do
not require ground truth data for the training of the model. For operator
learning, one typically solves the target operator using classical
numerical methods in an offline stage and then trains the neural networks
using the obtained numeric solution \cite{LuLu}. Once trained, the
neural networks in both methods, provide fast, almost real-time, inference
for any given initial condition. 
\end{itemize}
The outline for the remainder of this paper is as follows. Section
2 reviews some preliminaries about PINNs and spectral approximation
over manifolds. Section 3 describes the key aspects of our approach.
In Section 4 we provide, as a pedagogical example, the theory and
details of the method for the simple case of the heat equation over
the unit interval. In Sections 5 and 6 we show how our approach is
applied for nonlinear equations over the sphere and torus. Our extensive
experimental results include generalization studies where the testing
dataset is sampled from a significantly larger space than the training
set. We also verify the stability of our models by injecting random
noise to the input and validating the errors increase in controlled
manner. Concluding remarks are found in Section 7.

\section{Preliminaries}

\subsection{Physics informed neural networks}

\label{subsec:PINN}

In this section, we describe the basic approach to PINNs presented
in \cite{key-4}. Generally, the goal is to approximate the solution
for a differential equation over a domain $\Omega$ of the form: 
\[
u_{t}+\mathcal{N}[u]=0,\quad t\in[0,T],
\]
with some pre-defined initial and/or boundary conditions. Typically,
a PINN $\tilde{u}(x,t)$ is realized using a Multi Layer Perception
(MLP) architecture. This is a pass forward network where each $j$-th
layer takes as input the vector $v_{j-1}$ which is the output of
the previous layer, applies to it an affine transformation $y=M_{j}v+b_{j}$
and then a coordinate-wise nonlinearity $\sigma$ to produce the layer's
output $v_{j}$ 
\begin{equation}
v_{j}=\sigma\circ(M_{j}v_{j-1}+b_{j}).\label{MLP-layer}
\end{equation}
In some architectures either the bias vector $b_{j}$ and/or the coordinate-wise
nonlinearity $\sigma$ are not applied in certain layers. In a standard
PINN architecture, the input to the network $\tilde{u}$ is $v_{0}=(x,t)$.
The unknown parameters of the network are the collection of weights
$\{M_{j},b_{j}\}_{j}$ and the network is trained to minimize the
following loss function: 
\[
MSE_{B}+MSE_{0}+MSE_{D},
\]
with the boundary value loss component 
\[
MSE_{B}=\frac{1}{N_{b}}\sum_{i=1}^{N_{b}}|\tilde{u}(x_{i}^{b},t_{i}^{b})-u(x_{i}^{b},t_{i}^{b})|^{2},
\]
the initial condition loss component 
\[
MSE_{0}=\frac{1}{N_{0}}\sum_{i=1}^{N_{0}}|\tilde{u}(x_{i}^{0},0)-u(x_{i}^{0},0)|^{2},
\]
and the differential loss component 
\[
MSE_{D}=\frac{1}{N_{d}}\sum_{i=1}^{N_{d}}|(\tilde{u}_{t}+\mathcal{N}[\tilde{u}])(x_{i}^{d},t_{i}^{d})|^{2}.
\]
In the above, $\{(x_{i}^{b},t_{i}^{b})\}_{i=1}^{N_{b}}$ is a discretized,
set of time and space points, where each $u(x_{i}^{b},t_{i}^{b})$
is the true given boundary value at $(x_{i}^{b},t_{i}^{b})$. The
set $\{x_{i}^{0}\}_{i=1}^{N_{0}}$, is a discretized set of possibly
randomized points in the domain and the initial condition $u(x,0)$
is given. The set $\{(x_{i}^{d},t_{i}^{d})\}_{i=1}^{N_{d}}$, typically
contains randomly distributed internal domain collocation points and
time steps. Since the architecture of the neural network is given
analytically (as in \eqref{MLP-layer} for the case of MLP), the value
$(\tilde{u}_{t}+\mathcal{N}[\tilde{u}])|_{(x_{i}^{d},t_{i}^{d})}$
at a data-point $(x_{i}^{d},t_{i}^{d})$ can be computed using the
automatic differentiation feature of software packages such as TensorFlow
and Pytorch \cite{key-6,key-7} (in our work we used TensorFlow).
Thus, the aggregated loss function enforces the approximating function
$\tilde{u}$ to satisfy required initial and boundary conditions as
well as the differential equation.

As we emphasized in the introduction, our approach is an adaptation
of PINNs to operator learning \cite{LuLu}, where the network is trained
to provide approximations to solutions for any initial conditions
from a given subspace. As we shall see, the adaptation requires applying
the PI loss functions for a given training set of initial conditions
(see the loss functions \eqref{L-0-loss},\eqref{L-D-loss}).

\subsection{Spectral decompositions over manifolds}

PDEs on manifolds appear in a variety of problems and applications
in fluid dynamics, material science, geophysics, solid mechanics,
control theory and biology. It is sometimes challenging to apply numerical
methods such as finite differences and finite elements, since grid
or mesh generation as well as discretizing the corresponding operators
is complex. Therefore, applying PINNs in these cases is potentially
attractive, since the method is essentially grid free and does not
require ground truth data for its learning process.

Following recent advancements in deep learning methods over manifolds
that use a spectral approach \cite{key-16,key-22}, in this work we
base our PINN design on approximation of the numeric PDE solutions
in the spectral domain. To this end we recall a fundamental result
in the spectral theory over manifolds regarding the spectrum of the
Laplace-Beltrami operator $\Delta$ and the spectral representation
of the solution to the heat equation 
\begin{thm}
\label{thm:general-spectral} \cite[Theorem 10.13]{key-8} Let $\Omega$
be a non-empty compact relatively open subset of a Riemannian manifold
$\mathcal{M}$ with metric $g$ and measure $\mu$. The spectrum of
$\mathcal{L}:=-\Delta$ on $\Omega$ is discrete and consists of an
increasing sequence $\{\lambda_{k}\}_{k=1}^{\infty}$ of non-negative
eigenvalues (with multiplicity) such that $\lim_{k\rightarrow\infty}\lambda_{k}=\infty$.
There is an orthonormal basis $\{\phi_{k}\}_{k=1}^{\infty}$ in $L_{2}(\Omega)$
such that each function $\phi_{k}$ is an eigenfunction of $-\Delta$
with eigenvalue $\lambda_{k}$. Moreover, the solution to the heat
equation $u_{t}=\Delta u$ on $\Omega$ with initial condition $u(x,t)=f(x),\ f\in L_{2}(\Omega)$,
is given by: 
\[
u(x,t)=\sum_{k=1}^{\infty}e^{-\lambda_{k}t}\langle f,\phi_{k}\rangle\phi_{k}(x).
\]
\\
\end{thm}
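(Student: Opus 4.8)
\emph{Proof proposal.} The statement is classical, and the plan is to obtain it by combining the spectral theorem for self-adjoint operators with compact resolvent with standard elliptic and parabolic regularity. First I would make the operator precise: since $\Omega$ is relatively open, hence precompact, in $\mathcal{M}$, the natural realization of $\mathcal{L} = -\Delta$ is the Dirichlet Laplacian, defined as the Friedrichs extension associated with the closed, densely defined, non-negative quadratic form $\mathcal{E}(u,v) = \int_\Omega \langle \nabla u, \nabla v\rangle_g \, d\mu$ on the form domain $H^1_0(\Omega) \subset L_2(\Omega,\mu)$. Non-negativity of $\mathcal{L}$ is then immediate, since $\langle \mathcal{L}\phi,\phi\rangle = \mathcal{E}(\phi,\phi) = \|\nabla\phi\|_2^2 \geq 0$ for $\phi$ in the operator domain; in particular every eigenvalue is $\geq 0$.

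The heart of the spectral part is compactness. Because $\overline{\Omega}$ is compact, the Rellich--Kondrachov theorem gives a compact embedding $H^1_0(\Omega) \hookrightarrow L_2(\Omega)$, and this forces the resolvent $(\mathcal{L}+I)^{-1} \colon L_2(\Omega) \to L_2(\Omega)$ to be a compact, self-adjoint, strictly positive operator. Applying the spectral theorem for compact self-adjoint operators yields an orthonormal basis $\{\phi_k\}_{k=1}^\infty$ of $L_2(\Omega)$ consisting of its eigenfunctions, with eigenvalues $\mu_k > 0$ and $\mu_k \to 0$. Setting $\lambda_k := \mu_k^{-1} - 1$ turns these into eigenfunctions of $\mathcal{L}$ with $\lambda_k \geq 0$ and $\lambda_k \to \infty$; relabeling puts them in non-decreasing order, repeated according to multiplicity, which is the asserted sequence. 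Elliptic regularity applied to $\Delta\phi_k = -\lambda_k\phi_k$, bootstrapping from $L_2$, shows each $\phi_k$ is smooth in the interior of $\Omega$.

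For the heat equation I would simply \emph{define} $u(x,t) := \sum_{k=1}^\infty e^{-\lambda_k t}\langle f,\phi_k\rangle\phi_k(x)$ and verify the three required properties. Convergence in $L_2(\Omega)$ for each $t \geq 0$ is clear from $\sum_k |\langle f,\phi_k\rangle|^2 = \|f\|_2^2 < \infty$ and $0 < e^{-\lambda_k t} \leq 1$; the same estimate with dominated convergence over the summation index gives $\|u(\cdot,t)-f\|_2^2 = \sum_k (e^{-\lambda_k t}-1)^2|\langle f,\phi_k\rangle|^2 \to 0$ as $t\to 0^+$, i.e.\ the initial condition holds. For $t>0$, the decay of $e^{-\lambda_k t}$ beats any polynomial in $\lambda_k$, so combined with Weyl's asymptotics $\lambda_k \asymp k^{2/n}$, $n=\dim\mathcal{M}$, and the polynomial-in-$\lambda_k$ bounds on $\|\phi_k\|_{C^m(K)}$ over compact $K\subset\Omega$ coming from iterated elliptic estimates and Sobolev embedding, the spatially differentiated series and the $t$-differentiated series $\sum_k(-\lambda_k)e^{-\lambda_k t}\langle f,\phi_k\rangle\phi_k$ converge locally uniformly on $\Omega\times(0,\infty)$. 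This legitimizes term-by-term differentiation, whence $u_t = \sum_k(-\lambda_k)e^{-\lambda_k t}\langle f,\phi_k\rangle\phi_k = \sum_k e^{-\lambda_k t}\langle f,\phi_k\rangle\Delta\phi_k = \Delta u$. Uniqueness in the natural energy class is an energy estimate: if $w$ solves $w_t=\Delta w$ with $w(\cdot,0)=0$, then $\tfrac{d}{dt}\|w(\cdot,t)\|_2^2 = 2\langle\Delta w,w\rangle = -2\mathcal{E}(w,w)\leq 0$, forcing $w\equiv 0$.

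I expect the main obstacle to be precisely the justification of interchanging differentiation and summation for $t>0$ --- equivalently, controlling the heat kernel $p_t(x,y)=\sum_k e^{-\lambda_k t}\phi_k(x)\phi_k(y)$ and its derivatives. This is the non-soft ingredient: it requires the quantitative sup-norm and higher-derivative bounds on eigenfunctions together with Weyl's law, rather than abstract functional analysis alone. All of this is carried out in the cited reference \cite{key-8}; the above is the route I would follow.
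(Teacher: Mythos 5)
The paper does not prove this statement at all: it is quoted verbatim as a known result from Grigoryan's book (\cite[Theorem 10.13]{key-8}), and the appendix only proves Theorems \ref{thm:2}--\ref{thm:4}. So the comparison is with the standard literature proof, and your sketch follows essentially that route: Dirichlet (Friedrichs) realization of $-\Delta$ via the form on $H_{0}^{1}(\Omega)$, compactness of the embedding $H_{0}^{1}(\Omega)\hookrightarrow L_{2}(\Omega)$ for relatively compact $\Omega$, the spectral theorem for the compact self-adjoint resolvent, and then the series representation of the heat semigroup. That part is sound (one should also note $L_{2}(\Omega)$ is infinite-dimensional so the eigenvalue sequence is indeed infinite, and that the resolvent maps into the form domain so the compact embedding applies -- both routine). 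Two remarks on the parabolic part. First, your route is heavier than necessary: if, as in the source, $u_{t}=\Delta u$ is interpreted as an $L_{2}(\Omega)$-valued Cauchy problem with the strong $L_{2}$ derivative, then everything -- convergence, differentiation of the series in $t$, the initial condition, and uniqueness -- follows from the functional calculus $u(t)=e^{t\Delta}f$ alone, with no eigenfunction sup-norm bounds or Weyl asymptotics; smoothness in $(x,t)$ is then a separate local-regularity statement. Second, if you do insist on pointwise term-by-term differentiation, be aware that $\lambda_{k}\rightarrow\infty$ by itself does not give convergence of $\sum_{k}e^{-\lambda_{k}t}$; you genuinely need a quantitative lower bound such as $\lambda_{k}\gtrsim k^{2/n}$ (Weyl-type, or a Faber--Krahn/heat-kernel bound) together with polynomial-in-$\lambda_{k}$ derivative estimates for $\phi_{k}$ -- you correctly flag this as the non-soft ingredient and defer it to \cite{key-8}, which is acceptable for a classical cited theorem but is the one place where your outline is not self-contained.
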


This well established result motivates the following spectral paradigm.
To solve the heat equation with some initial condition, one should
first decompose the initial condition function to a linear combination
of the eigenfunctions basis and then apply a time-dependent exponential
decay on the initial value coefficients. An approximation entails
working with the subspace spanned by $\{\phi_{k}\}_{k=1}^{K}$, for
some sufficiently large $K$ (see e.g. Theorem \ref{thm:4} below).
For a general manifold $\mathcal{M}$, the eigenfunctions do not necessarily
have an analytic form and need to be approximated numerically. As
we will show, we also follow the spectral paradigm for more challenging
cases of nonlinear equations over manifolds, where the time dependent
processing of the initial value coefficients is not obvious. Nevertheless,
a carefully crafted `spectral-inspired' architecture can provide superior
results over standard network architectures.

\section{The architecture of spectral PINNs}

\label{sec:spectralPINN}

Let $\mathcal{M}\subset\mathbb{R}^{n}$ be a Riemannian manifold,
$\Omega\subset\mathcal{M}$ a non-empty compact relatively open subset
and $\mathcal{N}$ a differential operator over this manifold, which
can possibly be nonlinear. We assume our family of initial conditions
comes from a subset $W\subset L_{2}(\Omega)$, of finite dimension,
that can be selected to be sufficiently large. Given a vector of samples
$\vec{f}$ of $f\in W$ over a fixed discrete subset of $\Omega$,
a point $x\in\mathcal{M}$ and $t\in[0,T]$, we would like to find
an approximation $\tilde{u}(\vec{f},x,t)$, given by a trained neural
network $\tilde{u}$, to the solution 
\[
u_{t}+\mathcal{N}[u]=0,
\]
\[
u(x,t=0)=f(x),\ \forall x\in\Omega.
\]

Recall that typically PI networks are trained to approximate a solution
for a single specific initial condition (such as in \cite{key-4}).
However, we emphasize that our neural network model is trained only
once for the family of initial conditions from the subspace $W$ and
that once trained, it can be used to solve the equation with any initial
condition from $W$. Moreover, as we demonstrate in our experimental
results, the trained network has the `generalization' property, since
it is able to approximate well the solutions when the initial value
functions are randomly sampled from a larger space containing $W$.

Our method takes inspiration from spectral methods for solving PDEs.
It is composed of 3 steps implemented by 3 blocks, as depicted in
Figure \ref{fig:General-description-of}: 
\begin{enumerate}
\item \textbf{Transformation Block - }The role of this block is to compute
from the samples $\vec{f}$ at specified locations of the initial
value condition $f\in W$ a `projection' onto $U_{K}=span\{\phi_{k}\}_{k=1}^{K}$,
for some given $K$, where $\{\phi\}_{k=1}^{\infty}$ are the eigenfunctions
of the Laplace-Beltrami operator on the manifold. We denote this block
as $\tilde{\mathcal{C}}:\vec{W}\rightarrow\mathbb{R}^{K}$, where
$\vec{W}$ is a subset of $\mathbb{R}^{L}$ which contains sampling
vectors of functions from $W$ over a fixed discrete subset of $\Omega$.
The desired output of the block is an estimation $\{\tilde{f}_{k}\}_{k=1}^{K}$
of the coefficients $\{\langle f,\phi_{k}\rangle\}_{k=1}^{K}$. However,
in cases where it is difficult to work with the spectral basis, one
can train an encoder to transform the input samples to a compressed
representation space of dimension $K$. Also, although the network
is trained on point samples of functions from $W$, it is able to
receive as input a sample vector $\vec{f}$ of a function $f$ which
is from a larger subset containing $W$ and approximate the solution.

Since generating a uniform or even quasi-uniform set of locations
on a manifold can be challenging, we emphasize that the advantage
of our learning approach is that the samples $\vec{f}$ can be taken
even from a set of random locations on $\Omega$, as long as the set
is consistently used for all initial conditions and is sufficiently
dense for the required accuracy. Indeed, our architecture preserves
one of the main advantages of PINNs, that they are grid-free. That
is, once trained, the networks can accept as input any parametric
point $x\in\Omega$ and any time $t\in[0,T]$, so as to provide the
grid-free approximation $\tilde{u}(\vec{f},x,t)$.

In most cases, it is advantageous to have the choice of the sampling
set and the quantities $L$ and $K$ to be determined by `Nyquist-Shannon'-type
theorems on the manifold for the given subset $W$ and the subspace
$U_{K}=\textrm{span}\{\phi_{k}\}_{k=1}^{K}$. In the scenario where
$W\subset U_{K}$ and the sampling set of size $L$ is selected to
provide perfect `Shanon'-type reconstruction, the transformation block
may take the form of a simple linear transformation. In complex cases,
where we have no prior knowledge about the required sampling rate
or we do not have perfect reconstruction from the samples, we train
a transformation block $\tilde{\mathcal{C}}$ that is optimized to
perform a nonlinear `projection' based on a carefully selected training
set. 
\item \textbf{Time Stepping Block -} In this block we apply a neural network
that takes as input the output of the transformation block $\tilde{\mathcal{C}}(\vec{f})$,
which may be the approximation of the spectral basis coefficients
$\{\tilde{f}_{k}\}_{k=1}^{K}$, and a time stamp $t$, to compute
a time dependent representation. We denote this block as $\tilde{\mathcal{D}}:\mathbb{R}^{K}\times[0,T]\rightarrow\mathbb{R}^{K}$. 
\item \textbf{Reconstruction Block -} In this block we apply an additional
neural network on the output of the time stepping block $\tilde{\mathcal{D}}$,
together with the given input point $x\in\Omega$, to provide an estimate
$\tilde{u}(\vec{f},x,t)$ of the solution $u(x,t)$ with the initial
condition $f$. We denote this block as $\mathcal{\tilde{R}}:\mathbb{R}^{K}\times\Omega\rightarrow\mathbb{R}$. 
\end{enumerate}
Thus, our method is in fact a composition of the 3 blocks $\tilde{u}:\vec{W}\times\Omega\times[0,T]\rightarrow\mathbb{R}$
\[
\tilde{u}(\vec{f},x,t)=\mathcal{\tilde{R}}(\tilde{\mathcal{D}}(t,\tilde{\mathcal{C}}(\vec{f})),x).
\]

Observe that in scenarios where one requires multiple evaluations
at different locations $\{\tilde{u}(\vec{f},x_{i},t)\}_{i}$, $x_{i}\in\Omega$,
at a given time step $t\in[0,T]$, one may compute once the output
of the time stepping block $\tilde{\mathcal{D}}(t,\tilde{\mathcal{C}}(\vec{f}))$
and use it multiple times for all $\{x_{i}\}_{i}$, and in doing so,
reduce the total computation time.

\begin{figure}
\begin{centering}
\includegraphics[width=12cm,height=8cm]{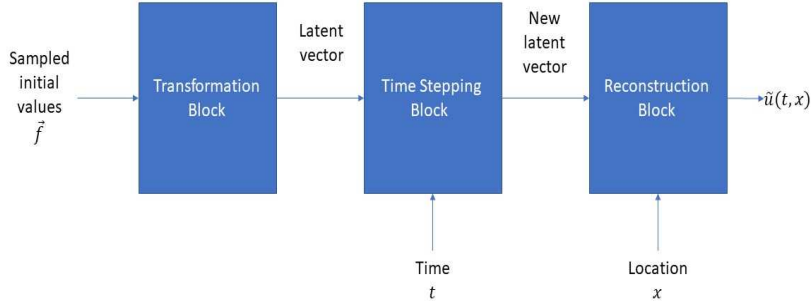} 
\par\end{centering}
\caption{\label{fig:General-description-of}General description of our method}
\end{figure}

\section{Introduction of the spectral PINN for the heat equation over $\Omega=[0,1]$}

We first review the prototype case of the heat equation on the unit
interval where we can provide rigorous proofs for our method as well
as showcase simple realization versions of our spectral network construction.
Recall the heat equation: 
\[
u_{t}=\alpha u_{xx},\quad x\in[0,1],t\in[0,0.5],
\]
with initial time condition: 
\[
u(x,t=0)=f(x),\ x\in[0,1].
\]

\subsection{Architecture and theory for the heat equation over $\Omega=[0,1]$}

The analytic solution to this equation can be computed in 3 steps
that are aligned with the 3 blocks of our architecture. Assume the
initial condition $f:[0,1]\rightarrow\mathbb{R}$ has the following
spectral representation 
\[
f(x)=\sum_{k=1}^{\infty}c_{k}\sin(2\pi kx).
\]
Next, apply the following transformation on the coefficients for a
given time step $t$ 
\[
\mathcal{D}(t,c_{1},c_{2},...):=(e^{-4\pi^{2}\alpha t}c_{1},e^{-4\pi^{2}\cdot2^{2}\alpha t}c_{2},...).
\]
Finally, evaluate the time dependent representation at the point $x$:
\[
u(x,t)=\mathcal{R}(e^{-4\pi^{2}\alpha t}c_{1},e^{-4\pi^{2}\cdot2^{2}\alpha t}c_{2},...,x):=\sum_{k=1}^{\infty}e^{-4\pi^{2}k^{2}\alpha t}c_{k}\sin(2\pi kx).
\]
We now proceed to provide the details of the numerical spectral PINN
approach in this scenario. First, we select as an example $K=20$
and $W=W_{20}$, where 
\[
W_{20}:=\left\{ {\sum_{k=1}^{20}c_{k}\sin(2\pi kx),\quad c_{1},...,c_{20}\in[-1,1],\sqrt{c_{1}^{2}+...+c_{20}^{2}}=1}\right\} .
\]
We sample each $f\in W_{20}$ using $L=101$ equally spaced points
in the segment $[0,1]$ to compute a vector $\vec{f}$. For the training
of the networks we use a loss function which is a sum of two loss
terms $L_{0}+L_{D}$. The loss $L_{0}$ enforces the network $\tilde{u}_{\theta}$
with weights $\theta$ to satisfy $N_{0}$ random training initial
conditions 
\begin{equation}
L_{0}(\theta)=\frac{1}{101N_{0}}\sum_{i=1}^{N_{0}}\sum_{j=0}^{100}\left|{\tilde{u}_{\theta}\left({\vec{f_{i}},\frac{j}{100},0}\right)-f_{i}\left({\frac{j}{100}}\right)}\right|^{2}.\label{L-0-loss}
\end{equation}

For the second loss term we randomly generate $N=5,000$ triples $(\vec{f}_{i},x_{i},t_{i})_{i=1}^{N}$
and enforce the model to obey the differential condition 
\begin{equation}
L_{D}(\theta)=\frac{1}{N}\sum_{i=1}^{N}\left|\frac{\partial\tilde{u}_{\theta}(\vec{f_{i}},x_{i},t_{i})}{\partial t}-\alpha\frac{\partial^{2}\tilde{u}_{\theta}(\vec{f_{i}},x_{i},t_{i})}{\partial x^{2}}\right|^{2}.\label{L-D-loss}
\end{equation}
The derivatives of the given neural network approximation in \eqref{L-D-loss}
are calculated using the automatic differentiation capabilities of
deep learning frameworks. In this work we use TensorFlow \cite{key-6}.

Observe that although we are using in this pedagogical example a uniform
grid for the samples of the initial conditions, as explained in Section
\ref{sec:spectralPINN}, the advantage of our learning approach is
that the samples $\vec{f}$ can be taken even from a set of random
locations, as long as the set is consistently used for all initial
conditions during training and inference and is sufficiently dense
for the required accuracy.

We compare two PINN architectures that provide an approximation to
the solution $u$: 
\begin{enumerate}
\item[(i)] \textbf{The naive model -} We benchmark our spectral method with
a deep learning model which is based on a standard MLP neural network,
that takes in as input $(\vec{f},t,x)\in\mathbb{R}^{103}$ and outputs
an approximation. This model is trained to be PI using the loss function
$L_{0}+L_{D}$, where the two terms are defined in \eqref{L-0-loss}
and \eqref{L-D-loss}. The network is composed of 5 dense layers $\mathbb{R}^{103}\rightarrow\mathbb{R}^{103}$
and finally a dense layer $\mathbb{R}^{103}\rightarrow\mathbb{R}$.
Each of the first five dense layers is followed by a non-linear activation
function. Typically, a Rectifier Linear Unit (ReLU) $\sigma(x)=(x)_{+}$,
is a popular choice as the nonlinear activation for MLP networks \cite{key-12}.
However, it is not suitable in this case, since its second derivative
is almost everywhere zero. Therefore we use $\tanh$ as the nonlinear
activation function. Observe that in this paper, the naive PINN model
differs from the classic PINN model reviewed in Subsection \ref{subsec:PINN},
in that it is trained to approximate the solution for any initial
condition from the given subspace. 
\item[(ii)] \textbf{The spectral model -} In some sense, our spectral model $\tilde{u}$
is `strongly' physics informed. Exactly as the naive model, it is
also trained using the loss functions \eqref{L-0-loss} and \eqref{L-D-loss},
to provide solutions to the heat equation. However, its architecture
is different from the naive architecture, in that it is modeled to
match the spectral method. The spectral model $\tilde{u}$ approximates
$u$ using the 3 blocks of the spectral paradigm approximation presented
in the previous section. We now provide the details of the architecture
and support our choice of design with rigorous proofs 
\end{enumerate}
\begin{enumerate}
\item[1.] \textbf{Sine transformation block} This block receives as input a
sampling vector $\vec{f}$ and returns the sine transformation coefficients
for $\{\sin(2\pi k\cdot)\}$, $k=1,\dots,20$. Due to the high sampling
rate $L=101$, compared with the frequency used $K=20$, the sampled
function $f$ can be fully reconstructed from $\vec{f}$ and this
operation can be realized perfectly using the Nyquist-Shannon sampling
formula. However, so as to simulate a scenario on a manifold where
the sampling formula cannot be applied, we train a network to apply
the transformation. To this end, we created $1,000$ initial value
conditions using trigonometric polynomials of degree 20, and trained
this block to extract the coefficients of those polynomials. In other
words, we pre-trained $\tilde{\mathcal{C}}:\mathbb{R}^{101}\rightarrow\text{\ensuremath{\mathbb{R}^{20}}}$
for the following task: 
\[
\tilde{\mathcal{C}}(\vec{f})=(c_{1},...,c_{20}),
\]
where $\vec{f}$ is the sampling vector of the function 
\[
f(x)=\sum_{k=1}^{20}c_{k}\sin(2\pi kx).
\]
In this simple case where $\Omega=[0,1]$, the network can simply
be composed of one dense layer with no nonlinear activation, which
essentially implies computing a transformation matrix from samples
to coefficients. As already noted in the introduction, for manifolds
such as the embedded torus (see Section \ref{sec:torus}), where the
spectral basis can only be computed numerically, or a nonlinear encoder
is trained to `simulate' the spectral basis, the architecture of the
transformation block is more complex. 
\item[2.] \textbf{Time stepping block} The time stepping block should approximate
the function: 
\begin{equation}
\mathcal{D}(t,c_{1},...,c_{20})=(e^{-4\pi^{2}\alpha t}c_{1},e^{-4\pi^{2}\cdot2^{2}\alpha t}c_{2},...,e^{-4\pi^{2}20^{2}\alpha t}c_{20}).\label{realize-D}
\end{equation}
We consider 2 architectures for this block:

\textbf{Realization time stepping block:}\\
 In the case of the heat equation we know exactly how the time stepping
block should operate and so we can design a true realization. The
first layer computes 
\[
t\rightarrow(-4\pi^{2}\alpha t,-4\pi^{2}2^{2}\alpha t,...,-4\pi^{2}20^{2}\alpha t).
\]
The second layer applies the exponential nonlinearity 
\[
(-4\pi^{2}\alpha t,-4\pi^{2}\cdot2^{2}\alpha t,...,-4\pi^{2}20^{2}\alpha t)\rightarrow(e^{-4\pi^{2}\alpha t},e^{-4\pi^{2}\cdot2^{2}\alpha t},...,e^{-4\pi^{2}20^{2}\alpha t}).
\]
Finally, we element-wise multiply the output of the second layer with
$(c_{1},...,c_{20})$ to output the time dependent spectral representation
\eqref{realize-D}.

\textbf{Approximate time stepping block:}\\
 In the case of general manifolds we may not be able to fully realize
the time stepping block. Therefore, we examine what are the consequences
of using an MLP network $\mathcal{\tilde{D}}$ that approximates for
given $K\ge1$ 
\[
\mathcal{D}(t,c_{1},...,c_{K}):=(e^{-4\pi^{2}\alpha t}c_{1},...,e^{-4\pi^{2}K^{2}\alpha t}c_{K}).
\]
The fact that the operator $\mathcal{D}$ is a composition of analytic
components allows us to construct relatively small approximating NN
as we prove in the following theorem (see the appedix for proofs): 
\begin{thm}
\label{thm:2}For any $0<\epsilon<1$ and $K\ge1$ there exists a
MLP network $\mathcal{\tilde{D}}$, consisting of dense layers and
$\tanh$ as an activation function, with $O(K^{3}+K\log^{2}(\epsilon^{-1}))$
weights such that 
\[
\|\mathcal{\tilde{D}}(t,c_{1},...,c_{K})-\mathcal{D}(t,c_{1},...,c_{K})\|_{\infty}\leq\epsilon,
\]
for all inputs $c_{1},...,c_{K}\in[-1,1],t\in[0,1]$. 
\end{thm}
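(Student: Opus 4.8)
The plan is to first reduce the claim to a one-dimensional statement. Write $\beta_k:=4\pi^2\alpha k^2$, so that $\mathcal{D}(t,c_1,\dots,c_K)=\bigl(e^{-\beta_1 t}c_1,\dots,e^{-\beta_K t}c_K\bigr)$ is the ``parallel'' composition of the $K$ independent scalar maps $\mathcal{D}_k\colon(t,c_k)\mapsto e^{-\beta_k t}c_k$ on $[0,1]\times[-1,1]$. A parallel composition of MLPs is again an MLP (use block-diagonal weight matrices in each layer), and its number of weights is the sum of the parts. Hence it suffices to build, for each $k$, a $\tanh$-MLP $\tilde{\mathcal{D}}_k$ with $\sup_{[0,1]\times[-1,1]}|\tilde{\mathcal{D}}_k-\mathcal{D}_k|\le\epsilon$ and $O(k^2+\log^2\epsilon^{-1})$ weights; summing over $k=1,\dots,K$ and using the triangle inequality in the coordinatewise sup-norm then gives $O(K^3+K\log^2\epsilon^{-1})$ weights, which is the claim.

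Each $\mathcal{D}_k$ is the product of the univariate factor $\varphi_k(t):=e^{-\beta_k t}$ and the trivial factor $c_k\mapsto c_k$, so I would assemble $\tilde{\mathcal{D}}_k$ from two black-box gadgets: a $\tanh$-MLP $E_k$ with $\sup_{t\in[0,1]}|E_k(t)-e^{-\beta_k t}|\le\epsilon/2$, and a $\tanh$-MLP $\Pi$ with $|\Pi(x,y)-xy|\le\epsilon/2$ for $(x,y)$ in a fixed box (say $[-2,2]^2$, to absorb the small overshoot of $E_k$). Setting $\tilde{\mathcal{D}}_k(t,c_k):=\Pi\bigl(E_k(t),c_k\bigr)$ and using $|c_k|\le1$, the error splits as $|\Pi(E_k,c_k)-E_k c_k|+|c_k|\,|E_k-e^{-\beta_k t}|\le\epsilon/2+\epsilon/2=\epsilon$, so the weight count of $\tilde{\mathcal{D}}_k$ is (weights of $E_k$) $+$ (weights of $\Pi$), and everything comes down to sizing these two gadgets.

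For $E_k$ I would exploit analyticity exactly as the surrounding text anticipates. The Taylor remainder of the entire function $e^{-\beta_k t}$ truncated at degree $d$ is bounded on $[0,1]$ by a constant times $\beta_k^{d+1}/(d+1)!$, which drops below $\epsilon/4$ once $d=d_k:=O(\beta_k+\log\epsilon^{-1})=O(k^2+\log\epsilon^{-1})$; this yields a polynomial $q_k$ of degree $d_k$ with $\sup_{[0,1]}|q_k-e^{-\beta_k t}|\le\epsilon/4$. I would then invoke the standard fact that a univariate polynomial of degree $d$ is approximable uniformly on a bounded interval, to any accuracy, by a $\tanh$-MLP with $O(d)$ weights: one approximates $x\mapsto x^2$ (hence, by polarization, the product map) with an $O(1)$-weight $\tanh$ block --- using that $\tanh$ is odd, so the even combination $\tanh(h(x+a))-\tanh(h(x-a))$ has Taylor expansion $2\tanh(ha)+h^2\tanh''(ha)x^2+O(x^4)$ and, after rescaling and subtracting the constant, matches $x^2$ on $[-1,1]$ up to an $O(h^2)$ error --- then chains $O(d)$ such blocks to generate the monomials $1,x,\dots,x^{d}$ and finally takes the linear combination with the coefficients of $q_k$, the accumulated error being controlled by shrinking $h$ (which only enlarges the magnitudes of the weights, not their number). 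This produces $E_k$ with $O(d_k)=O(k^2+\log\epsilon^{-1})$ weights.

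For $\Pi$ I would use the same squaring/polarization idea at the fixed target accuracy $\epsilon/2$, invoking the known bound that the product of two reals in a bounded box is approximable to accuracy $\delta$ by a $\tanh$-MLP with $O(\log^2\delta^{-1})$ weights; thus $\Pi$ costs $O(\log^2\epsilon^{-1})$ weights. Adding up, $\tilde{\mathcal{D}}_k$ has $O(k^2+\log\epsilon^{-1})+O(\log^2\epsilon^{-1})=O(k^2+\log^2\epsilon^{-1})$ weights, and summing over $k$ gives the stated $O(K^3+K\log^2\epsilon^{-1})$. The main obstacle --- and essentially the only place the structure of the problem enters --- is the degree estimate for $e^{-\beta_k t}$: one must verify that the polynomial degree needed for accuracy $\epsilon$ grows only \emph{linearly} in $\beta_k$ (hence like $k^2$), not exponentially, since an exponential dependence would wreck the $O(K^3)$ bound, and this is exactly what the factorial decay $\beta_k^{d+1}/(d+1)!$ of the Taylor tail provides. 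A secondary point that must be checked is that the $\tanh$-emulation of a degree-$d$ polynomial costs only $O(d)$ weights --- pushing the accuracy dependence into the magnitudes of the weights via the analyticity of $\tanh$ --- rather than $O(d\cdot\mathrm{polylog}(\epsilon^{-1}))$, which would again break the count. The remaining estimates (error propagation through the monomial chain, through the polarization identity, and through the coordinatewise sup-norm) are routine.
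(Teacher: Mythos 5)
Your proposal is correct, and its skeleton coincides with the paper's proof: both factor $\mathcal{D}$ into $K$ parallel scalar maps $(t,c_k)\mapsto c_k e^{-\beta_k t}$, realize each one as a multiplication gadget applied to $(c_k, E_k(t))$ where $E_k$ approximates the exponential factor, split the error exactly as you do, $|\Pi(E_k,c_k)-E_k c_k|+|c_k|\,|E_k-e^{-\beta_k t}|\le\epsilon/2+\epsilon/2$, and get the count by summing $O(k^2+\log\epsilon^{-1})$ over $k$ plus $K$ product blocks of cost $O(\log^2\epsilon^{-1})$. The genuine difference is in how the two gadgets are built. The paper invokes Mhaskar's theorem on shallow $\tanh$ approximation of analytic functions (Theorem 2.3 of \cite{key-15}), applied once in dimension $2$ for the product (yielding $O(\log^2\epsilon^{-1})$ weights) and once in dimension $1$ for each $e^{-\beta_k t}$ (yielding $O(k^2+\log\epsilon^{-1})$ weights, the $k^2$ coming from the growth of $e^{-\beta_k z}$ on the ellipse). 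You instead construct them by hand: Taylor truncation of $e^{-\beta_k t}$ at degree $d_k=O(\beta_k+\log\epsilon^{-1})$ (your factorial-decay degree bound is the correct and crucial point), followed by $\tanh$ squaring/polarization blocks of $O(1)$ weights each, with accuracy pushed into weight magnitudes, chained to produce the monomials. This elementary route is legitimate since the theorem constrains neither depth nor weight magnitudes, and it yields the same counts (your own gadget would even give each product block $O(1)$ weights, slightly improving the second term). Two points you should make explicit rather than call routine: the error through the monomial chain and the final linear combination is amplified by the Taylor coefficients $\beta_k^{j}/j!$, whose sum is $e^{\beta_k}$, so the per-gadget $O(h^2)$ error must be shrunk by an exponentially large (in $k^2$) factor --- harmless only because weight magnitudes are unconstrained; and feeding the input $t$ (and intermediate monomials) through $O(d_k)$ successive $\tanh$ layers requires an approximate-identity pass-through such as $\tanh(hx)/h$, again $O(1)$ weights per layer. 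Incidentally, your insistence on a product gadget valid on $[-2,2]^2$ to absorb the overshoot of $E_k$ is slightly more careful than the paper, which states its multiplication network only on $[-1,1]^2$.
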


We remark that it is possible to approximate $\mathcal{D}$ using
ReLU as the nonlinear activation as it shown in \cite{key-13}. However,
recall the ReLU is not suitable for our second order differential
loss function \eqref{L-D-loss}. In the experiments below, the approximating
MLP time stepping block is composed of 5 layers. 
\item[3.] \textbf{Reconstruction Block} The reconstruction block should operate
as follow: 
\[
\mathcal{R}(a_{1},....,a_{K},x)=\sum_{k=1}^{K}a_{k}\sin(2\pi kx).
\]
In the case of the heat equation, for given $t\in[0,1]$, the coefficients
$\{a_{k}\}_{k=1}^{K}$ are $\{e^{-4\pi^{2}k^{2}t}c_{k}\}_{k=1}^{K}$
or an approximation to these coefficients. Here, also one can design
a realization block which uses the sine function as a nonlinearity.
To support the general case we have the following result 
\begin{thm}
\label{thm:3} For fixed $A>0$, $K\ge1$ and any $0<\epsilon<1$,
there exists a MLP network $\tilde{\mathcal{R}}$, consisting of dense
layers and $\tanh$ as an activation function, with $O(K^{2}+K\log^{2}(K\epsilon^{-1}))$
weights for which 
\[
|\mathcal{\tilde{\mathcal{R}}}(a_{1},....,a_{K},x)-\mathcal{R}(a_{1},....,a_{K},x)|\leq\epsilon,
\]
where $a_{1},...,a_{K}\in[-A,A],x\in[0,1]$. 
\end{thm}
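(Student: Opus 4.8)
The plan is to reduce the approximation of $\mathcal{R}$ to two standard building blocks: (a) approximating the univariate maps $x \mapsto \sin(2\pi k x)$ on $[0,1]$ for $k=1,\dots,K$, and (b) approximating the $K$ scalar products $a_k \cdot \sin(2\pi k x)$, and then summing. First I would handle the multiplication gadget: on a bounded box $[-A,A]\times[-B,B]$ the product map $(u,v)\mapsto uv$ can be approximated to accuracy $\delta$ by a $\tanh$-MLP with $O(\log^2(\delta^{-1}))$ weights, via the polarization identity $uv = \tfrac14((u+v)^2-(u-v)^2)$ together with a $\tanh$-network approximation of the square function $s\mapsto s^2$ (which follows from the known efficient $\tanh$-approximation of analytic/polynomial functions, the same ingredient presumably invoked for Theorem \ref{thm:2}). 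Since $|\sin(2\pi k x)|\le 1$, taking $B=1$, each product $a_k\sin(2\pi kx)$ with $a_k\in[-A,A]$ costs $O(\log^2(\delta^{-1}))$ weights, and there are $K$ of them, giving the $K\log^2(\cdot)$ term.

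Next I would address the sine blocks. Each target $g_k(x)=\sin(2\pi k x)$ is analytic on $[0,1]$ with derivative bounds growing like $(2\pi k)^m$ in the $m$-th derivative, so on this fixed interval a direct $\tanh$-approximation of a single analytic function would have a $\log k$-type dependence; summed over $k$ this is negligible compared to the $K^2$ term. However, the cleaner route that produces exactly the stated $O(K^2)$ count is to build all $K$ sine functions jointly from a single "seed." Concretely, approximate $x\mapsto(\cos(2\pi x),\sin(2\pi x))$ once by a small $\tanh$-network, and then generate $(\cos(2\pi k x),\sin(2\pi k x))$ for $k=2,\dots,K$ by iterating the angle-addition formulas $\cos(2\pi(k{+}1)x)=\cos(2\pi x)\cos(2\pi kx)-\sin(2\pi x)\sin(2\pi kx)$ and similarly for sine. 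Each iteration is a bounded-domain bilinear map realizable by the multiplication gadget with $O(\log^2(\cdot))$ weights, and there are $O(K)$ iterations — but to keep the final error controlled one must track error propagation through the recursion, which forces the per-step accuracy to scale like $\epsilon/K$ (hence the $\log^2(K\epsilon^{-1})$ inside) and, more importantly, one must verify the intermediate quantities stay in a fixed bounded box so the gadgets apply uniformly; this is where the $K^2$ arises (roughly $K$ steps, each a bilinear combination assembled from $O(K)$-bounded pieces, or alternatively $K$ sine blocks each of size $O(K)$). I would then assemble $\tilde{\mathcal{R}}$ as: one seed sub-network, a depth-$O(K)$ (or width-$O(K)$) recursion producing $\{\sin(2\pi kx)\}_{k=1}^K$, $K$ multiplication gadgets forming $a_k\sin(2\pi kx)$, and a final exact linear layer summing them; choosing each gadget's accuracy as a suitable $\epsilon/K$ and using $|a_k|\le A$ to bound the accumulated error by $\epsilon$, then tallying weights gives $O(K^2+K\log^2(K\epsilon^{-1}))$.

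The main obstacle I anticipate is the error-propagation and boundedness bookkeeping in the recursive construction of the higher-frequency sines: each approximate multiplication introduces an additive error that is then fed into subsequent multiplications, so one needs a Grönwall-type estimate showing the total error after $K$ steps is still $O(K)$ times the per-step error (not exponentially amplified), and one needs a uniform a priori bound on all intermediate activations — true values lie in $[-1,1]$, but the network's approximants must be shown to stay in, say, $[-2,2]$ so a single fixed multiplication gadget works at every step. A safer but slightly less elegant alternative, which I would fall back on if the recursion's constants get unwieldy, is to approximate each $\sin(2\pi kx)$ independently using the analytic-function $\tanh$-approximation theorem on the fixed interval $[0,1]$: the required network size for accuracy $\delta$ is polynomial in $k$ and polylogarithmic in $\delta^{-1}$, and one would need to check that the degree/size bound is $O(k)+O(\log^2(\delta^{-1}))$ per frequency so that summing over $k\le K$ and setting $\delta=\epsilon/(KA)$ yields the claimed $O(K^2+K\log^2(K\epsilon^{-1}))$ total; the multiplication-and-sum layer is identical in both approaches. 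Either way, the combinatorial heart of the proof is controlling how per-component accuracies and sizes aggregate across the $K$ frequencies.
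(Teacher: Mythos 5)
Your fallback route is essentially the paper's proof: the paper applies Mhaskar's $\tanh$-approximation theorem for analytic functions once per frequency to build subnetworks $S_k$ with $\max_{x\in[0,1]}|\sin(2\pi kx)-S_k(x)|\le\epsilon/(2K)$, each costing $O(k+\log(K\epsilon^{-1}))$ weights (the linear-in-$k$ cost comes from the factor $\max_{z\in E_\rho}|\sin(2\pi kz)|\sim e^{ck}$ in the error bound, and summing over $k\le K$ is exactly the source of the $K^2$ term), then uses multiplication gadgets $M$ accurate to $\epsilon/(2K)$ on a fixed box with $O(\log^2(K\epsilon^{-1}))$ weights each, and sets $\tilde{\mathcal{R}}(a_1,\dots,a_K,x)=\sum_{k=1}^K M(a_k,S_k(x))$, closing with the same two-term triangle-inequality split and the same $\epsilon/K$ accuracy allocation you describe. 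Your primary route (a seed $\sin(2\pi x),\cos(2\pi x)$ network plus an angle-addition recursion) is genuinely different and plausible, but, as you yourself flag, it needs the error-propagation and boundedness bookkeeping done carefully (the natural way is to track the Euclidean norm of the $(\cos,\sin)$ error vector, since the exact step is a rotation and hence non-amplifying, whereas a naive componentwise bound picks up a factor $|{\cos}|+|{\sin}|\le\sqrt2$ per step and appears exponential); moreover your attribution of the $K^2$ term to this recursion is off: if the recursion works it would use only $O(K\log^2(K\epsilon^{-1}))$ weights, i.e.\ fewer than the stated bound (harmless, since the theorem is an upper bound), while in the paper the $K^2$ genuinely arises from the per-frequency cost of approximating the analytic continuation of $\sin(2\pi k\cdot)$, whose modulus grows exponentially in $k$ off the real axis. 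In short, your fallback is the paper's argument and is complete in outline; the recursion is an optional refinement whose missing Gr\"onwall-type and boundedness details you correctly identified as the only real gap.
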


\end{enumerate}
In the experiments below, the approximating MLP reconstruction block
is composed of 5 layers. Using theorem \ref{thm:2} and \ref{thm:3},
we can prove a general theorem that provides an estimate for the approximation
of a MLP network. We first give the definition of Sobolev spaces \cite{key-23}:\\

\begin{defn}
Let $\Omega\subset\mathbb{R}^{n}$ and $C_{0}^{r}(\Omega)$ be the
space of continuously $r$-differentiable with compact support functions.
For $1\leq p<\infty$, the Sobolev space $W_{p}^{r}(\Omega)$ is the
completion of $C_{0}^{r}(\Omega)$ with respect to the norm 
\[
\|f\|_{W_{p}^{r}(\Omega)}=\sum_{|\alpha|\leq r}\|\partial^{\alpha}f\|_{L_{p}(\Omega)},
\]
where $\partial^{\alpha}f=\frac{\partial^{|\alpha|}f}{\partial x_{1}^{\alpha_{1}}...\partial x_{n}^{\alpha_{n}}},|\alpha|=\sum_{i=1}^{n}\alpha_{i}$. 
\end{defn}

With this definition at hand we are ready to state a result on the
approximation capabilities of our spectral architecture when MLP networks
are used to approximate the spectral realization 
\begin{thm}
\label{thm:4} Let $r\in\mathbb{N}$. For any $0<\epsilon<1$ there
exists a MLP neural network $\tilde{u}$, with $\tanh$ nonlinearities
and $O(\epsilon^{-3/r}+\epsilon^{-1/r}\log^{2}(\epsilon^{-(1+1/r)}))$
weights (the constant depends on $r$) for which the following holds:
For any $f\in W_{2}^{r}([0,1])$, $f=\sum_{k=1}^{\infty}c_{k}\sin(2\pi kx)$,
$\|f^{(r)}\|_{2}\leq1$ and $u$, the solution to the heat equation
on $\Omega=[0,1]$ with the initial condition $f$, the network $\tilde{u}$
takes the input $\{c_{k}\}_{k=1}^{K}$, $K\ge c\epsilon^{-1/r}$ and
provides the estimate 
\[
\|u(f,\cdot,t)-\tilde{u}(f,\cdot,t)\|_{L_{2}[0,1]}\leq\epsilon,\qquad\forall t\in[0,1].
\]
\end{thm}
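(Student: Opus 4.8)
The plan is to chain together the three approximation results already available — the essentially exact sine transformation block, Theorem~\ref{thm:2} for the time stepping block, and Theorem~\ref{thm:3} for the reconstruction block — with a truncation estimate coming from Sobolev regularity, and then balance the parameter budgets of all pieces against $\epsilon$. First I would fix the truncation level. Since $f\in W_2^r([0,1])$ with $f=\sum_k c_k\sin(2\pi kx)$ and $\|f^{(r)}\|_2\le 1$, the Fourier coefficients satisfy $\sum_k (2\pi k)^{2r} c_k^2 \le C$, hence the tail $\sum_{k>K} c_k^2 \le C K^{-2r}$. Because the heat semigroup is a contraction on $L_2$ (the multipliers $e^{-4\pi^2 k^2 \alpha t}$ are bounded by $1$), the truncated solution $u_K(x,t)=\sum_{k=1}^K e^{-4\pi^2 k^2\alpha t} c_k \sin(2\pi k x)$ satisfies $\|u(f,\cdot,t)-u_K(f,\cdot,t)\|_{L_2[0,1]} \le C K^{-r}$ uniformly in $t\in[0,1]$. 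Choosing $K \asymp \epsilon^{-1/r}$ makes this term $\lesssim \epsilon$; this is the source of the constraint $K\ge c\epsilon^{-1/r}$ in the statement.

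Next I would handle the network error on the truncated problem. The network takes $\{c_k\}_{k=1}^K$ as input (so no transformation-block error enters in the statement as written — the transformation is the identity on these coordinates, or a single exact dense layer). Apply Theorem~\ref{thm:2} with accuracy parameter $\epsilon' $ to get $\tilde{\mathcal D}$ with $\|\tilde{\mathcal D}(t,c)-\mathcal D(t,c)\|_\infty \le \epsilon'$ using $O(K^3 + K\log^2(\epsilon'^{-1}))$ weights. The output coordinates $a_k = e^{-4\pi^2 k^2\alpha t}c_k$ lie in $[-1,1]$, and the perturbed coordinates lie in $[-A,A]$ with $A=1+\epsilon' \le 2$, so Theorem~\ref{thm:3} applies with some accuracy $\epsilon''$, giving $\tilde{\mathcal R}$ with $O(K^2 + K\log^2(K\epsilon''^{-1}))$ weights. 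Composing, the pointwise error of $\tilde u = \tilde{\mathcal R}\circ\tilde{\mathcal D}$ against $\mathcal R\circ\mathcal D = u_K$ is bounded, via the Lipschitz dependence of $\mathcal R$ on its first $K$ arguments, by $\epsilon'' + L_{\mathcal R}\cdot \sqrt{K}\,\epsilon'$, where the $\sqrt{K}$ comes from passing from an $\ell_\infty$ bound on the coefficient error to the effect on a sum of $K$ sine terms (in $L_2[0,1]$ each $\sin(2\pi k\cdot)$ has norm $1/\sqrt2$, and by orthogonality the $L_2$ error is actually $\le (1/\sqrt2)\sqrt{K}\,\epsilon'$). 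Setting $\epsilon' \asymp \epsilon/\sqrt K$ and $\epsilon'' \asymp \epsilon$ and adding the truncation term gives the desired $\|u(f,\cdot,t)-\tilde u(f,\cdot,t)\|_{L_2[0,1]} \le \epsilon$ for all $t\in[0,1]$.

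Finally I would tally the weights. With $K \asymp \epsilon^{-1/r}$, $\epsilon' \asymp \epsilon\,\epsilon^{1/(2r)}$ and $\epsilon'' \asymp \epsilon$, the time stepping block costs $O(K^3 + K\log^2(\epsilon'^{-1})) = O(\epsilon^{-3/r} + \epsilon^{-1/r}\log^2(\epsilon^{-(1+1/(2r))}))$ and the reconstruction block costs $O(K^2 + K\log^2(K\epsilon''^{-1})) = O(\epsilon^{-2/r} + \epsilon^{-1/r}\log^2(\epsilon^{-(1+1/r)}))$; the dominant terms are $\epsilon^{-3/r}$ and $\epsilon^{-1/r}\log^2(\epsilon^{-(1+1/r)})$, matching the claimed bound (the $\log^2$ factor absorbs the difference between $1+1/(2r)$ and $1+1/r$ up to constants). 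The main obstacle I anticipate is bookkeeping rather than conceptual: one must be careful that the error propagation from the time stepping block through the reconstruction block does not lose more than a $\sqrt K$ factor — this requires using the $L_2$-orthogonality of the sine basis rather than a crude triangle inequality, and checking that the Lipschitz constant of the true $\mathcal R$ in its coefficient arguments is $O(\sqrt K)$ in $L_2$, uniformly in $x$ and $t$ — and that the accuracy demands $\epsilon',\epsilon''$ fed into Theorems~\ref{thm:2} and~\ref{thm:3} stay polynomial in $\epsilon$, so that their $\log^2$ weight costs remain $O(\log^2(\epsilon^{-1}))$ up to constants depending on $r$.
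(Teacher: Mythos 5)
Your proposal is correct and follows essentially the same route as the paper's proof: truncate the spectral expansion at $K\asymp\epsilon^{-1/r}$ using the Sobolev decay of the coefficients, invoke Theorem \ref{thm:2} for the time stepping block and Theorem \ref{thm:3} (with $A=2$ to absorb the perturbed coefficients) for the reconstruction block, combine by the triangle inequality, and tally the weights. The only difference is cosmetic: you propagate the coefficient error through $\mathcal{R}$ via $L_2$-orthogonality of the sines (a $\sqrt{K}$ factor, so $\epsilon'\asymp\epsilon/\sqrt{K}$ suffices), whereas the paper uses the cruder pointwise Lipschitz bound $K$ and demands accuracy $\epsilon/(3K)$ from $\tilde{\mathcal{D}}$; since the weight count depends on this accuracy only through $\log^2$, both choices yield the identical bound $O(\epsilon^{-3/r}+\epsilon^{-1/r}\log^{2}(\epsilon^{-(1+1/r)}))$.
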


\subsection{Experimental Results}

\label{subsec:heat-experiment}

In our experiments we tested 4 PINN models. The first is a naive PINN
model with vanilla MLP architecture consisting of 6 layers. This model
accepts as input the samples of the initial condition, a point $x\in[0,1]$
and time step $t$ and outputs an approximation to the solution. We
then also tested 3 variations of the spectral model with the various
blocks realized or approximated. Training was performed using $5,000$
and $25,000$ samples of the form $(\vec{f},x,t)$, where $\vec{f}$
is a sampling vector of trigonometric polynomial of degree 20 on 101
equispaced points in the segment $[0,1]$ with $t\in[0,0.5]$. To
guarantee slow vanishing of the solution over time we used $\alpha=0.01$.
The testing of the 4 models was done using 20 randomly sampled initial
conditions. For each model, we measured the Mean Squared Error (MSE)
of the approximated solutions for these initial conditions over 500
uniform time steps and 101 uniform locations. We also tested an operator
learning model of the type `Unstacked DeepONet' \cite{LuLu}. This
is a network that is trained without knowledge of the PDE and therefore
requires for its training phase ground truth data of the training
solutions. A `branch' subnet processes the input samples of the initial
condition, a `trunk' subnet processes the $x$ and $t$ parameters
and then the output of these two subnets is further processed to provide
the approximation. We found that operator learning requires a significantly
larger network and this in turn necessitates a larger training set
of 50,000 samples. The results are summarized in Table \ref{tab:segment res}.
We see that a network that realizes the spectral method performs best.
When approximations replace realization components then they still
outperform standard architectures.

\begin{table}
\begin{centering}
\begin{tabular}{|>{\centering}p{1.5cm}|>{\centering}p{5cm}|c|>{\centering}p{1.5cm}|>{\centering}p{1.5cm}|>{\centering}p{1.5cm}|}
\hline 
Model number in plots  & Model Architecture  & \#Model weights  & Testing MSE: 5,000 training samples  & Testing MSE: 25,000 training samples  & Testing MSE: 50,000 training samples \tabularnewline
\hline 
\hline 
1  & Naive Model  & 53,664  & 1.3e-4  & 1.19e-4  & N/A \tabularnewline
\hline 
2  & Spectral model - full realization (time stepping and reconstruction
blocks)  & \textbf{2,960}  & \textbf{9.0e-6} & \textbf{8.3e-6}  & N/A \tabularnewline
\hline 
3  & Spectral model - MLP approximation of time stepping block, realization
of reconstruction block  & 11,980  & 5.7e-5  & 4.9e-5  & N/A \tabularnewline
\hline 
4  & Spectral model - realization of time stepping block, MLP approximation
of reconstruction block  & 10,401  & 2.9e-5 & 2.87e-5  & N/A \tabularnewline
\hline 
5  & Operator learning `Unstacked DeepONet' \cite{LuLu}  & 998,102  & N/A  & N/A  & 9.81e-5 \tabularnewline
\hline 
\end{tabular}
\par\end{centering}
\caption{Heat equation over $\Omega=[0,1]$ - Comparison of a standard naive
PINN model, 3 variants of our spherical PINN model and operator learning.}
\label{tab:segment res} 
\end{table}

In Figure \ref{fig:Error-versus-time segment} we plot over different
time steps, the sums over 20 test cases of mean squared error between
the approximation of the network $\tilde{u}$ and the ground truth
$u$. 
\[
Error(t)=\sum_{i=1}^{20}\frac{1}{101}\sqrt{\sum_{k=0}^{100}\left|{\tilde{u}\left({\vec{f}_{i},\frac{k}{100},t}\right)-u\left({f_{i},\frac{k}{100},t}\right)}\right|^{2}}.
\]
We show some examples of the exact solution $u$ and the approximations
of the different variants of neural network at different times and
with several initial condition in figure \ref{fig:examples-of-exact segment}.
\begin{figure}
\centering{}\includegraphics[width=12cm,height=8cm]{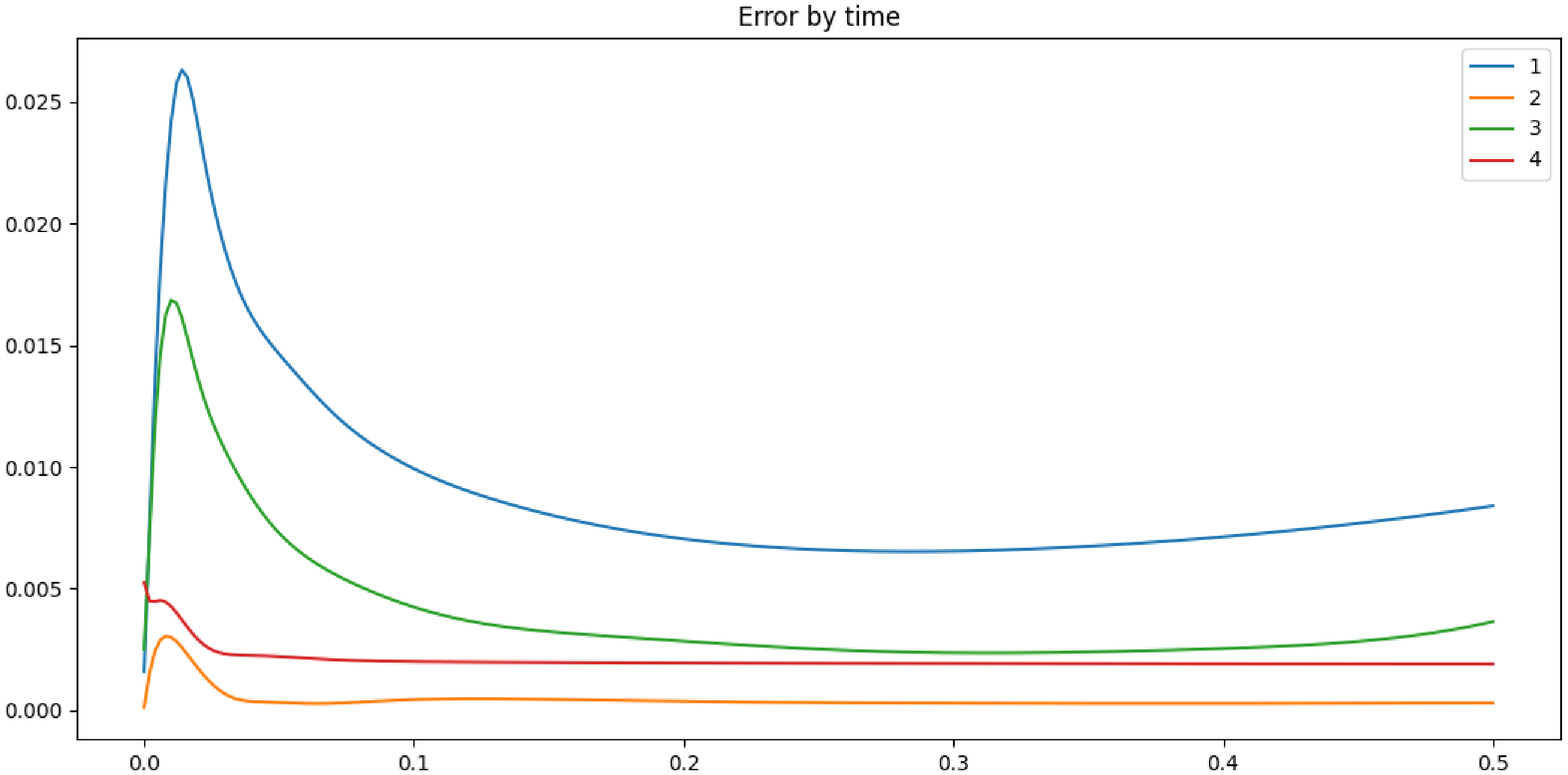}\caption{\label{fig:Error-versus-time segment}Heat equation on $[0,1]$ -
Error versus time}
\end{figure}

\begin{figure}
\begin{centering}
\includegraphics[width=12cm,height=8cm]{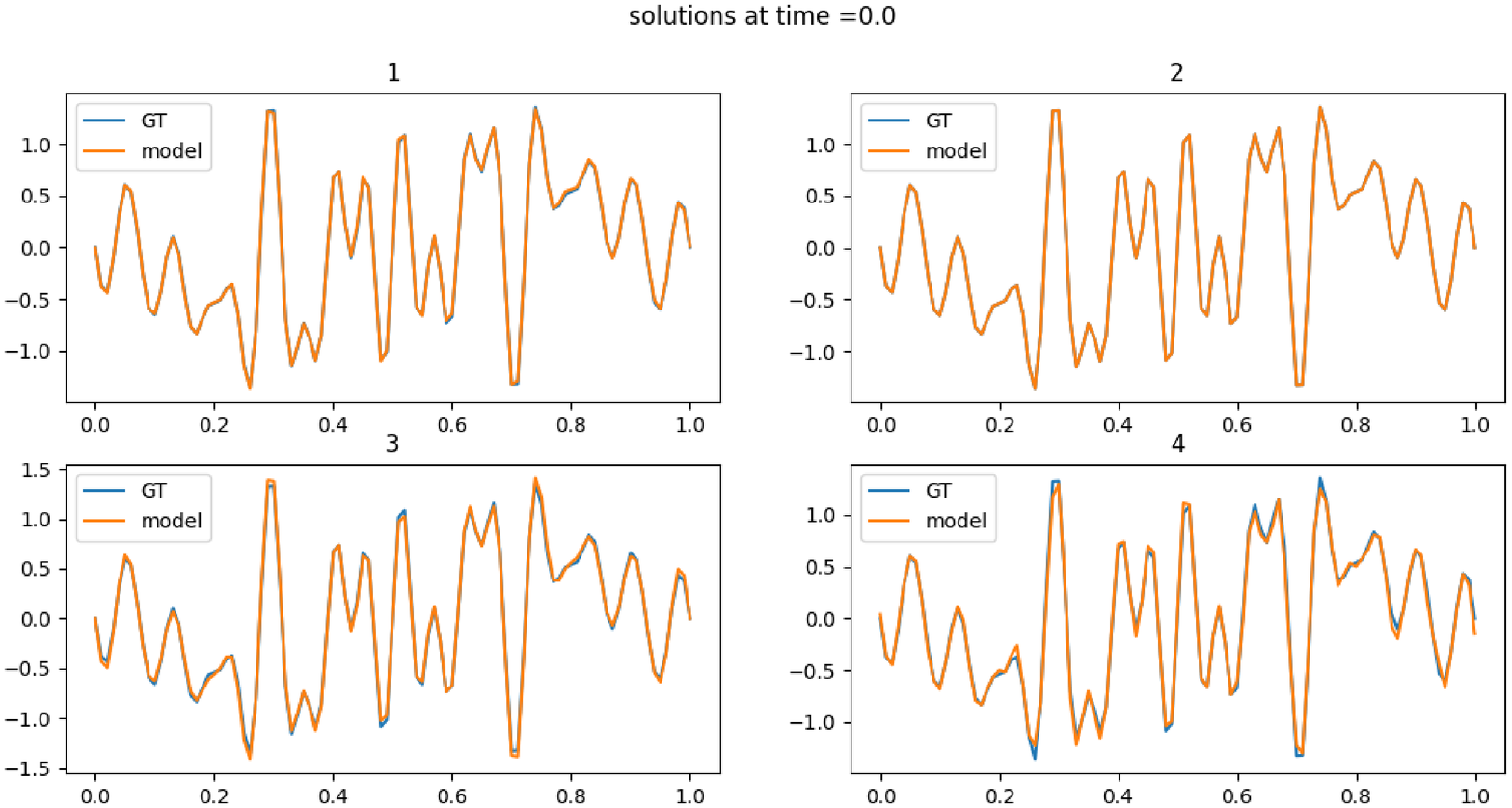}\\
 \includegraphics[width=12cm,height=8cm]{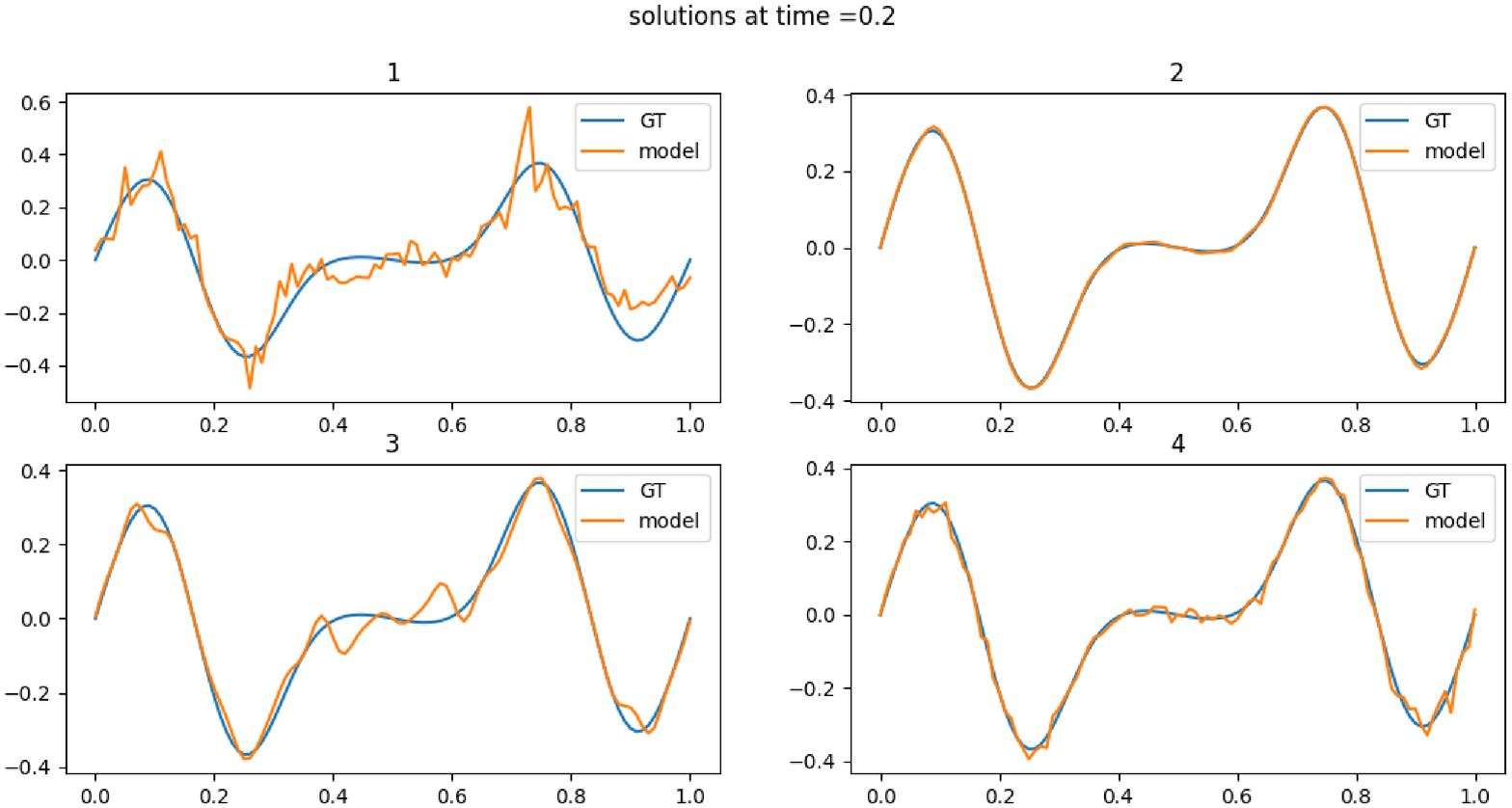}\\
 \includegraphics[width=12cm,height=8cm]{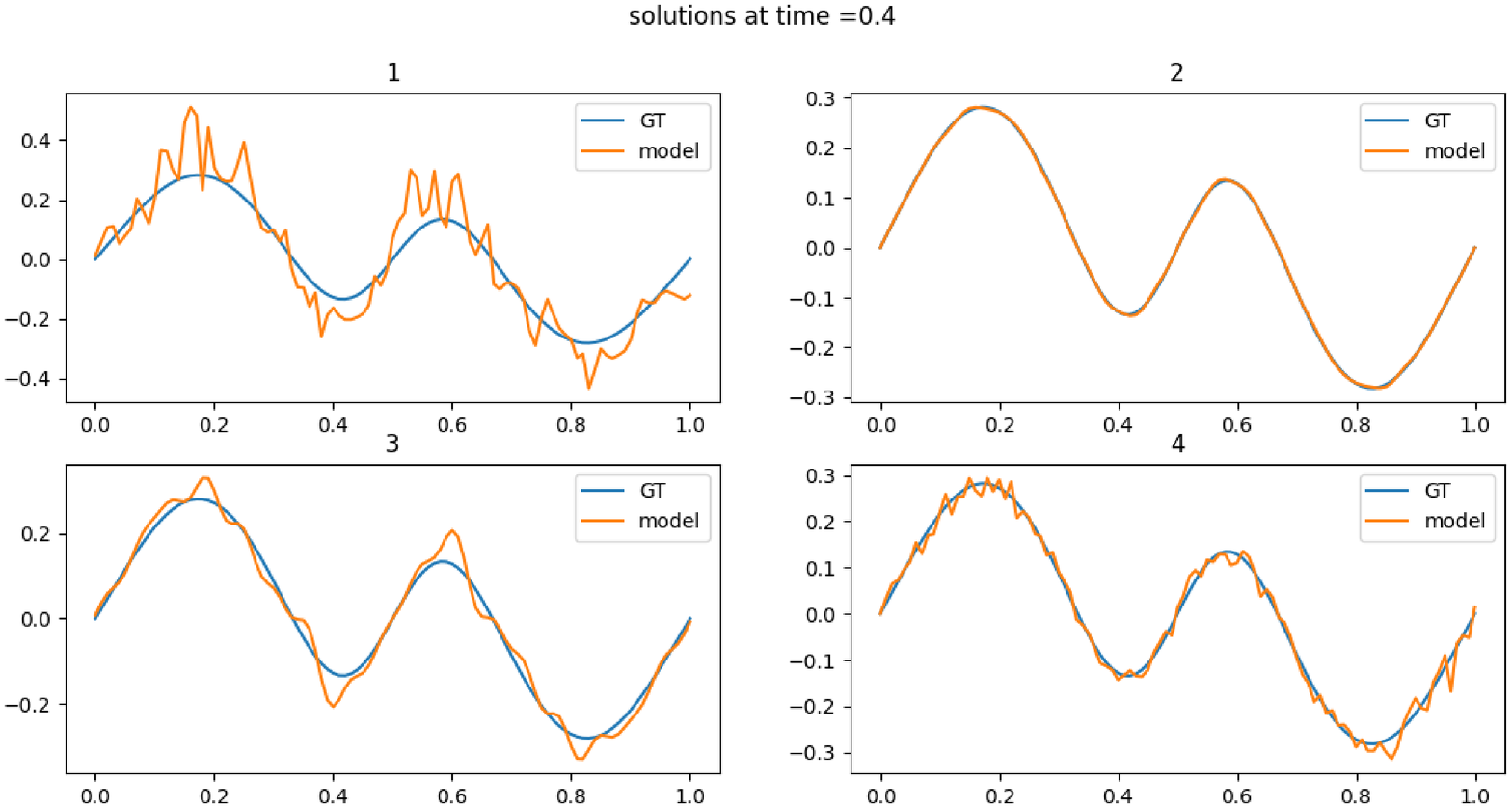} 
\par\end{centering}
\caption{\label{fig:examples-of-exact segment}Heat equation over {[}0,1{]}
- comparisons of the ground truth solution and the different neural
network solutions with different initial conditions and at different
times}
\end{figure}

In addition, we performed generalization and stability analysis for
the different architectures. To evaluate the ability of our networks
to generalize beyond the training space of polynomials of degree 20,
we tested the different networks using initial conditions from a space
of polynomials of degree 30. Namely, 
\[
W_{30}=\left\{ \sum_{k=1}^{30}c_{k}\sin(2\pi kx),\quad c_{1},...,c_{20}\in[-1,1],\sqrt{c_{1}^{2}+...+c_{20}^{2}}=1\right\} .
\]
To evaluate the stability of our networks, we added normal random
noise with mean $0$ and variance $0.3$ to the initial condition
sample vectors and evaluated at different time stamps the following
normalized metric 
\begin{equation}
\frac{\|\tilde{u}(\vec{f}+\vec{\delta},\cdot,t)-\tilde{u}(\vec{f},\cdot,t)\|_{2}}{\|\delta\|_{2}},\label{norm-metric}
\end{equation}
where $\delta_{i}\sim N(0,0.3)$. The results of the generalization
test can be found in Table \ref{tab:Heat-equation-segment-generalization},
and the results, averaged over 20 random initial conditions, for the
stability test can be found in Table \ref{tab:Heat-equation-segment-stability}.
In both tests, we can observe that all spectral model variants outperform
the naive model. 
\begin{table}
\centering{}%
\begin{tabular}{|>{\centering}p{1.5cm}|>{\centering}p{6cm}|c|}
\hline 
Model number in plots  & Model Architecture  & MSE\tabularnewline
\hline 
\hline 
1  & Naive Model  & 1.0e-3\tabularnewline
\hline 
2  & Spectral model - full realization (time stepping and reconstruction
blocks)  & \textbf{7.1e-4}\tabularnewline
\hline 
3  & Spectral model - MLP approximation of time stepping block, realization
of reconstruction block  & 8.1e-4\tabularnewline
\hline 
4  & Spectral model - realization of time stepping block, MLP approximation
of reconstruction block  & 7.4e-4\tabularnewline
\hline 
\end{tabular}\caption{\label{tab:Heat-equation-segment-generalization}Heat equation on
$[0,1]$ - generalization results }
\end{table}

\begin{table}
\begin{centering}
\begin{tabular}{|>{\centering}p{1.5cm}|>{\centering}p{5cm}|c|c|c|}
\hline 
Model number in plots  & Model Architecture  & $T=0.2$  & $T=0.4$  & $T=0.5$\tabularnewline
\hline 
\hline 
1  & Naive Model  & 3.68  & 3.82  & 3.77\tabularnewline
\hline 
2  & Spectral model - full realization (time stepping and reconstruction
blocks)  & \textbf{{1.07} } & \textbf{{0.84} } & \textbf{{0.78}}\tabularnewline
\hline 
3  & Spectral model - MLP approximation of time stepping block, realization
of reconstruction block  & 1.2  & 0.99  & 0.96\tabularnewline
\hline 
4  & Spectral model - realization of time stepping block, MLP approximation
of reconstruction block  & \textbf{{1.07} } & 0.85  & \textbf{{0.78}}\tabularnewline
\hline 
\end{tabular}
\par\end{centering}
\caption{\label{tab:Heat-equation-segment-stability}Heat equation on $[0,1]$
- stability test results using the normalized metric \eqref{norm-metric}}
and noise $\sim N(0,0.3)$. 
\end{table}

The theoretical and empirical results for the simple case of the heat
equation over $\Omega=[0,1]$ motivate us to establish guidelines
for designing spectral PINN networks in much more complicated scenarios.
Namely, we should try to realize the various blocks, approximate them
or at the least design them inspired by the spectral method.

\section{The sphere $\mathbb{S}^{2}$}

In this section, we demonstrate our method in a more challenging setup,
a nonlinear equation on a curved manifold. The Allen-Cahn equation
over the sphere $\mathbb{S}^{2}$ is defined by \cite{key-14}: 
\begin{equation}
u_{t}=\epsilon\Delta u+u-u^{3},\label{Allen-Cahn-sphere}
\end{equation}
where $\epsilon>0$ and the Laplace-Beltrami operator is 
\[
\Delta=\frac{\partial^{2}}{\partial\theta^{2}}+\frac{\cos\theta}{\sin\theta}\frac{\partial}{\partial\theta}+\frac{1}{\sin^{2}\theta}\frac{\partial^{2}}{\partial\phi^{2}},
\]
with $\phi$ is the azimuth angle and $\theta$ is the polar angle.

\subsection{Theory and spectral PINN architecture for the Allen-Cahn equation
on $\mathbb{S}^{2}$}

On $\mathbb{S}^{2}\subset\mathbb{R}^{3}$ the spectral basis is the
spherical harmonic functions \cite{key-9}: 
\begin{defn}
\label{def:The-spherical-harmonic}The spherical harmonic function
of degree $l$ and order $m$ is given by: 
\[
Y_{l}^{m}(\theta,\phi)=(-1)^{m}\sqrt{\frac{(2l+1)}{4\pi}\frac{(l-m)!}{(l+m)!}}P_{l}^{m}(\cos\theta)e^{im\phi},
\]
where $\theta\in[0,\pi]$ is the polar angle, $\phi\in[0,2\pi)$ is
the azimuth angle and $P_{l}^{m}:[-1,1]\rightarrow\mathbb{R}$ is
the associated Legendre polynomial. 
\end{defn}

Each spherical harmonic function is an eigenfunction of the Laplace-Beltrami
operator satisfying 
\[
\Delta Y_{l}^{m}=-l(l+1)Y_{l}^{m}.
\]
In our work, for simplicity, we use the real version of the spherical
harmonics, defined by: 
\[
Y_{lm}=\begin{cases}
\sqrt{2}(-1)^{m}Im(Y_{l}^{|m|}), & -l\le m<0,\\
Y_{l}^{0}, & m=0,\\
\sqrt{2}(-1)^{m}Re(Y_{l}^{m}), & 0<m\le l.
\end{cases}
\]
\\
 The inputs to our networks are of type $(F,(\theta,\phi),t)$, where
$F\in\mathbb{R}^{20\times20}$ is a sampling matrix of the initial
condition on uniform azimuth-polar grid of a spherical function, $\theta\in[0,\pi],\phi\in[0,2\pi)$
are the coordinates of a point on the sphere and $t\in[0,1]$. The
loss functions are similar to the loss functions used in section 4,
with the required modifications, such as for the differential loss
term 
\begin{equation}
L_{D}(\theta)=\frac{1}{N}\sum_{i=1}^{N}\left|\frac{\partial\tilde{u}_{\theta}(F_{i},x_{i},t_{i})}{\partial t}-(\epsilon\Delta\tilde{u}_{\theta}+\tilde{u}_{\theta}-\tilde{u}_{\theta}^{3})(F_{i},x_{i},t_{i})\right|^{2}.\label{L-D-loss-sphere}
\end{equation}
Our goal is to construct a spectral PINN architecture that will outperform
the naive PINN architecture. Here are the details of the 3 blocks
of the spectral model that follow the blueprint of Section \ref{sec:spectralPINN}
: 
\begin{enumerate}
\item \textbf{Transformation Block} \\
 This block receives as input a flatten sampling matrix $\vec{F}\in\mathbb{R}^{400}$
of an initial condition $f$ from the space 
\[
\sum_{l=0}^{9}\sum_{m=-l}^{l}c_{l,m}Y_{lm}(\theta,\phi).
\]
It returns the 100 spherical harmonic coefficients of degree 9. By
\cite[Theorem 3]{key-18} under these conditions, spherical harmonics
of degree 9 can be perfectly reconstructed. Thus, training one dense
linear layer $\mathcal{\tilde{C}}:\mathbb{R}^{400}\rightarrow\text{\ensuremath{\mathbb{R}^{100}}}$,
recovers the perfect reconstruction formula 
\[
\tilde{\mathcal{C}}(\vec{F})=(c_{0,0},c_{1,-1},c_{1,0},c_{1,1},...,c_{9,-9},...,c_{9,0},...,c_{9,9}),
\]
\item \textbf{Time Stepping Block}\\
 Unlike the heat equation on the unit interval, the Allen-Cahn equation
\eqref{Allen-Cahn-sphere} on the sphere, does not admit an analytic
spectral solution. Nevertheless, we design an architecture that follows
the spectral paradigm and compare it with a standard PINN MLP architecture.
We test our hypothesis by conducting an ablation study using three
optional architectures for the time stepping block: 
\begin{enumerate}
\item \textbf{Input of Allen-Cahn Nonlinear Part}\\
 In this architecture, we further adapt the architecture to the nature
of the equation, specifically to the non-linear part of the Allen-Cahn
equation. Thus, in this variant, the input to the time stepping block
is composed of: the transformation of the initial condition, the transformation
of the nonlinear part of the initial condition and the time variable
$(\tilde{\mathcal{C}}(\vec{F}),\tilde{\mathcal{C}}(\vec{F}-\vec{F}^{3}),t)$.
Therefore the time stepping block is defined as 
\[
\mathcal{\tilde{D}}:\mathbb{R}^{100}\times\mathbb{R}^{100}\times[0,T]\rightarrow\mathbb{R}^{100},
\]
where 
\[
\mathcal{\tilde{D}}(\tilde{\mathcal{C}}(\vec{F}),\tilde{\mathcal{C}}(\vec{F}-\vec{F}^{3}),t)=(c_{0,0}(t),c_{1,-1}(t),c_{1,0}(t),c_{1,1}(t),...,c_{9,-9}(t),...,c_{9,0}(t),...,c_{9,9}(t)).
\]
With the additional input of the non-linear part, this variant of
the time stepping block is a sum of two sub-blocks $\mathcal{\tilde{D}}=\mathcal{\tilde{D}}_{1}+\mathcal{\tilde{D}}_{2}$.
The component $\mathcal{\tilde{D}}_{1}$ is a sub-block designed to
capture an exponential dynamic of the solution across time. The sub-block
$\mathcal{\tilde{D}}_{2}$ is a standard PINN sub-block. The exponential
sub-block $\mathcal{\tilde{D}}_{1}$ is defined by 
\[
\mathcal{\tilde{D}}_{1}(\tilde{\mathcal{C}}(\vec{F}),\tilde{\mathcal{C}}(\vec{F}-\vec{F}^{3}),t)=e^{\mathcal{\tilde{D}}_{1,1}(t)}\odot\mathcal{\tilde{D}}_{1,2}(\tilde{\mathcal{C}}(\vec{F}),\tilde{\mathcal{C}}(\vec{F}-\vec{F}^{3})),
\]
where $\odot$ is element-wise vector multiplication. The component
$\mathcal{\tilde{D}}_{1,1}:\mathbb{R}\rightarrow\mathbb{R}^{100}$
is a simple dense layer with no bias, i.e. $\mathcal{\tilde{D}}_{1,1}(t)=V\cdot t$
where $V\in\mathbb{R}^{100}$ is a learnable vector. The component
$\mathcal{\tilde{D}}_{1,2}:\mathbb{R}^{100}\times\mathbb{R}^{100}\rightarrow\mathbb{R}^{100}$
is an MLP subnetwork with 6 layers with $\tanh$ activations. Finally,
the sub-block $\mathcal{\tilde{D}}_{2}$ is also an MLP subnetwork
with 6 layers and $\tanh$ activations. The full architecture with
this time stepping variant is depicted in Figure \ref{fig:Full-architecture-spherical}.
\begin{figure}
\begin{centering}
\includegraphics[width=12cm,height=8cm]{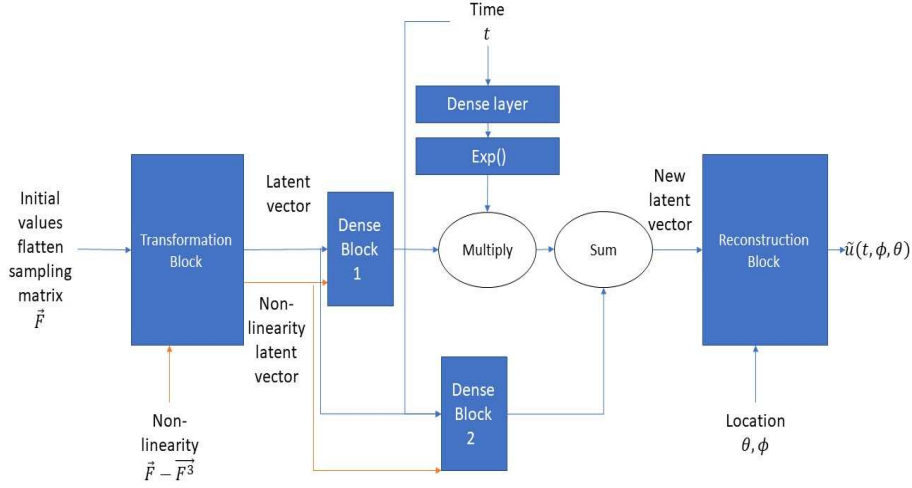} 
\par\end{centering}
\caption{\label{fig:Full-architecture-spherical}Full architecture for spherical
setting - the red arrows are used only in variant (a) for time stepping
block}
\end{figure}
\item \textbf{Standard Exponential Block}\\
 This variant of the time-stepping block is similar to the one described
in (a), but without the non-linear input $\tilde{\mathcal{C}}(\vec{F}-\vec{F}^{3})$.
Thus, the subnetwork capturing the exponential behavior takes the
form 
\[
\mathcal{\tilde{D}}_{1}(\tilde{\mathcal{C}}(\vec{F}),t)=e^{\mathcal{\tilde{D}}_{1,1}(t)}\odot\mathcal{\tilde{D}}_{1,2}(\tilde{\mathcal{C}}(\vec{F})).
\]
In this architecture we add 5 more dense layers to this block, as
each layer requires less weights. 
\item \textbf{Naive MLP Time Stepping Block}\\
 In this variant of the time stepping block, the input is $(\tilde{\mathcal{C}}(\vec{F}),t)$
and the architecture is a simple MLP block of 12 layers with $\tanh$
activation functions. 
\end{enumerate}
\item \textbf{Reconstruction Block}\\
 The heuristics of our spectral approach is that the output of the
time stepping block should be (once trained) a representation space
resembling the coefficients of the spectral basis at the given time.
Therefore, we design the reconstruction block to be composed of dense
layers, but we use activation functions of the form $\sin^{l},\cos^{l}$,
$0\leq l\leq9$, on the input data point $(\theta,\phi)$, since these
activation functions are the building blocks of the spherical harmonics
functions. To this end, we first apply two subnetworks on the data
point $(\theta,\phi)$ 
\[
\mathcal{R}_{l,\sin,0}(\theta,\phi),\mathcal{R}_{l,\cos,0}(\theta,\phi):\mathbb{R}^{2}\rightarrow\mathbb{R}^{2}.
\]
We then apply on their output, component wise, the spectral activation
functions 
\[
\sin^{l}\circ\mathcal{R}_{l,\sin,0}(\theta,\phi),\quad\cos^{l}\circ\mathcal{D}_{l,\cos,0}(\theta,\phi),\quad0\le l\le9.
\]
Next we apply dense layers on the output of the activation functions
\[
\mathcal{R}_{l,\sin,1},\mathcal{R}_{l,\cos,1}:\mathbb{R}^{2}\rightarrow\mathbb{R}^{100},\quad0\le l\le9.
\]
We assemble these pieces to produce a subnetwork $\mathcal{R}_{loc}:\mathbb{R}^{2}\rightarrow\mathbb{R}^{100}$
\[
\mathcal{R}_{loc}(\theta,\phi)=\sum_{l=0}^{9}\mathcal{R}_{l,\sin,1}(\sin^{l}\circ\mathcal{R}_{l,\sin,0}(\theta,\phi))\odot\mathcal{R}_{l,\cos,1}(\cos^{l}\circ\mathcal{R}_{l,\cos,0}(\theta,\phi)),
\]
where $\odot$ is element-wise vector multiplication.

We apply separately, on the output of the time stepping block a subnetwork
$\mathcal{R}_{d}:\mathbb{R}^{100}\rightarrow\mathbb{R}^{100}$. Finally,
our reconstruction network $\tilde{\mathcal{R}}$ is a dot-product
between the outputs of $\mathcal{R}_{d}$ and $\mathcal{R}_{loc}$
\begin{align*}
 & \tilde{\mathcal{R}}(c_{0,0}(t),c_{1,-1}(t),c_{1,0}(t),c_{1,1}(t),...,c_{9,-9}(t),...,c_{9,0}(t),...,c_{9,9}(t),\theta,\phi)\\
 & \qquad=\langle\mathcal{R}_{d}(c_{0,0}(t),c_{1,-1}(t),c_{1,0}(t),c_{1,1}(t),...,c_{9,-9}(t),...,c_{9,0}(t),...,c_{9,9}(t)),\mathcal{R}_{loc}(\theta,\phi)\rangle.
\end{align*}

\end{enumerate}

\subsection{Experimental Results}

We generated training data consisting of $N=5,000$ randomly chosen
samples of the form $(\vec{F},(\theta,\phi),t)$, where $\vec{F}$
is a flattened sampling matrix of initial conditions randomly sampled
from 
\[
W=\left\{ {\sum_{l=0}^{9}\sum_{m=-l}^{l}c_{lm}Y_{lm}(\theta,\phi),\qquad c_{lm}\in[-1,1],\sqrt{\sum_{l=0}^{9}\sum_{m=-l}^{l}c_{lm}^{2}}=1}\right\} ,
\]
on the uniform parametric grid 
\[
\theta_{j}=\frac{\pi}{19}j,\ j\in\{0,...,19\},\qquad\phi_{k}=\frac{2\pi}{20}k,\ k\in\{0,...,19\}.
\]
During the training of the spectral model we used some manipulations
to improve the results: 
\begin{enumerate}
\item Pre-training the transformation block and the reconstruction block
separately before training the full model, using the MSE loss function
\[
\frac{1}{N}\sum_{i=1}^{N}\left|{F(\theta_{i},\phi_{i})-\tilde{\mathcal{R}}(\tilde{\mathcal{C}}(\vec{F}_{i}),(\theta_{i},\phi_{i}))}\right|^{2}.
\]
\item When training the full model, we started the first 20 epochs by freezing
the weights of the transformation and reconstruction blocks that were
pre-trained separately in (1) and training only the time stepping
block. We observed that this technique where the transformation block
and the reconstruction are pre-trained and then kept constant for
the first epochs provides better initialization of the time-stepping
block and overall better results.

In this stage of the training, we used a loss function containing
three terms. In addition to the standard initial condition loss and
the differential loss we added new loss to enforce that the time stepping
block does not change the spherical harmonics coefficients at time
zero. Formally, the new loss term over the training set is 
\begin{equation}
\frac{1}{100N}\sum_{i=1}^{N}\|\mathcal{\tilde{D}}(\tilde{\mathcal{C}}(\vec{F_{i}}),0)-\tilde{\mathcal{C}}(\vec{F_{i}})\|_{2}^{2}.\label{3rd-loss}
\end{equation}

\item Finally, we trained the full model with all 3 loss terms for 25 more
epochs. 
\end{enumerate}
Since there is no analytical solution for the Allen-Cahn equation
over $\mathbb{S}^{2}$, we used the numerical scheme IMEX-BDF4 \cite{key-14}
as ground truth for testing our models. Unlike \cite{key-14}, we
used the spherical harmonic functions basis and not the double spherical
Fourier method which was used in \cite{key-14} due to performance
considerations. We tested our models using 20 random initial conditions
and predicted the solutions for all grid points: 
\[
\theta_{j}=\frac{\pi}{19}j,\ j\in\{0,...,19\},\quad\phi_{k}=\frac{2\pi}{20}k,\ k\in\{0,...,19\},\quad t_{n}=\frac{1}{500}n,\ n\in\{0,...,500\}.
\]
We benchmarked 3 spectral PINN variants of the with the naive PINN
model that has MLP architecture consisting of 26 layers with $\tanh$
activations. Table \ref{tab:Allen-Cahn-equation-spherical} shows
the comparison of the 4 models for two cases of the diffusion coefficient
in \eqref{Allen-Cahn-sphere} $\epsilon=0.01,0.001$. As in Subsection
\ref{subsec:heat-experiment}, testing was performed by measuring
MSE for the approximated solutions for 20 initial conditions over
500 uniform time steps. We can see that our model achieves better
accuracy than the naive model, with significantly less parameters.
We can also see that there is a benefit to the special processing
of the non-linear part of Allen Cahn equation by feeding the time
stepping block with the non-linear part of the initial condition.
In Figure \ref{fig:Error-versus-time-spherical} we show the norm
of the error in different time steps for the case $\epsilon=0.1$.
As in the previous example, we performed generalization and stability
tests. For the generalization test we used random initial conditions
from the larger set of spherical harmonics of degree 14: 
\[
W_{G}=\left\{ {\sum_{l=0}^{14}\sum_{m=-l}^{l}c_{lm}Y_{lm}(\theta,\phi),\qquad c_{lm}\in[-1,1],\sqrt{\sum_{l=0}^{14}\sum_{m=-l}^{l}c_{lm}^{2}}=1}\right\} .
\]
For the stability test we used the technique as in the previous section
with noise $\delta\sim N(0,0.3)$ and the metric \eqref{norm-metric}.
The results of generalization and stability tests can be found in
tables \ref{tab:sphre - generalization} and \ref{tab: sphere-stability}
respectively (averaged over 20 random initial conditions). Again,
we can see that all spectral model variants outperform the naive model.

\begin{table}
\begin{centering}
\begin{tabular}{|>{\centering}p{1.5cm}|>{\centering}p{5cm}|c|c|c|}
\hline 
Model number in plots  & Model Architecture  & \#weights  & MSE with $\epsilon=0.1$  & MSE with $\epsilon=0.001$ \tabularnewline
\hline 
\hline 
1  & Naive Model  & 4,070,704  & 1.1e-4  & 2.1e-4 \tabularnewline
\hline 
2  & Spectral model, time stepping variant (a) - Input of Allen-Cahn nonlinear
part  & \textbf{391,186}  & \textbf{4.8e-5}  & \textbf{6.1e-5} \tabularnewline
\hline 
3  & Spectral model, time stepping variant (b) - Standard time stepping
exponential block  & 452,590  & 6.7e-5  & 6.9e-5 \tabularnewline
\hline 
4  & Spectral model, time stepping variant (c) - Naive time stepping dense
block  & 490,682  & 9.7e-5  & 8.1e-5 \tabularnewline
\hline 
\end{tabular}
\par\end{centering}
\caption{\label{tab:Allen-Cahn-equation-spherical}Allen-Cahn equation \eqref{Allen-Cahn-sphere}
over $\mathbb{S}^{2}$ with $\epsilon=0.1,0.001$ - Comparison of
standard naive PINN model with 3 variants of our spherical PINN model}
\end{table}

\begin{figure}
\centering{}\includegraphics[width=12cm,height=8cm]{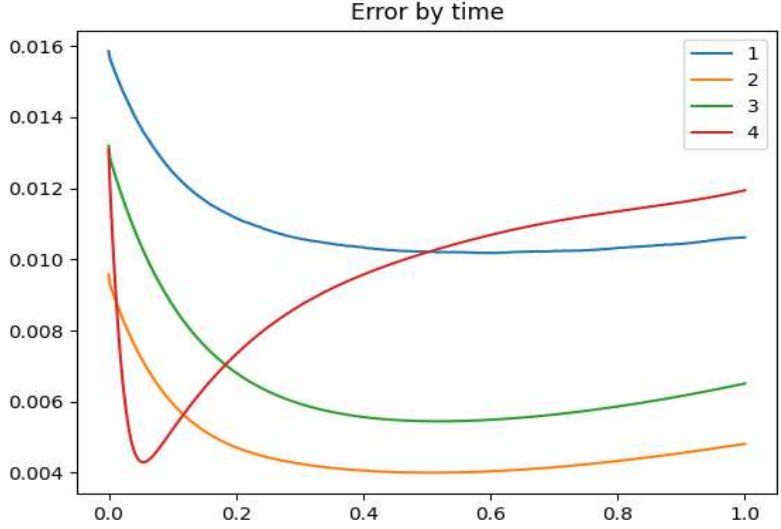}\caption{\label{fig:Error-versus-time-spherical} Allen-Cahn equation over
$\mathbb{S}^{2}$ with $\epsilon=0.1$ - Error over time of the naive
and spectral variant PINN models on testing dataset}
\end{figure}

\begin{table}
\begin{centering}
\begin{tabular}{|>{\centering}p{1.5cm}|>{\centering}p{6cm}|c|}
\hline 
Model number in plots  & Model Architecture  & MSE\tabularnewline
\hline 
\hline 
1  & Naive Model  & 3.6e-4\tabularnewline
\hline 
2  & Spectral model, time stepping variant (a) - Input of Allen-Cahn nonlinear
part  & \textbf{1.1e-4}\tabularnewline
\hline 
3  & Spectral model, time stepping variant (b) - Standard time stepping
exponential block  & 1.3e-4\tabularnewline
\hline 
4  & Spectral model, time stepping variant (c) - Naive time stepping dense
block  & 1.2e-4\tabularnewline
\hline 
\end{tabular}
\par\end{centering}
\caption{\label{tab:sphre - generalization}Allen-Cahn equation over $\mathbb{S}^{2}$
with $\epsilon=0.1$ - generalization test results}
\end{table}

\begin{table}
\centering{}%
\begin{tabular}{|>{\centering}p{1.5cm}|>{\centering}p{6cm}|c|c|c|}
\hline 
Model number in plots  & Model Architecture  & $T=0.4$  & $T=0.7$  & $T=1.0$\tabularnewline
\hline 
\hline 
1  & Naive Model  & 3.3  & 3.29  & 3.28\tabularnewline
\hline 
2  & Spectral model, time stepping variant (a) - Input of Allen-Cahn nonlinear
part  & \textbf{0.69}  & \textbf{0.65}  & \textbf{0.65} \tabularnewline
\hline 
3  & Spectral model, time stepping variant (b) - Standard time stepping
exponential block  & 0.79  & 0.74  & 0.73 \tabularnewline
\hline 
4  & Spectral model, time stepping variant (c) - Naive time stepping dense
block  & 2.9  & 2.8  & 2.8\tabularnewline
\hline 
\end{tabular}\caption{\label{tab: sphere-stability}Allen-Cahn equation over $\mathbb{S}^{2}$
with $\epsilon=0.1$ - stability test results using the normalized
metric \eqref{norm-metric} and noise $\sim N(0,0.3)$.}
\end{table}

Next we compare the training time required for the models. In Figure
\ref{fig:sphere-train-time} we see the training loss over the training
epochs for the naive PINN model 1 and the spectral PINN variant 2.
The left hand zoom out plot takes into account the training epochs
used by the spectral PINN for the initial training of the transformation-reconstruction
subnetworks and then plots the MSE for the training of the full spectral
network. The right hand side shows the MSE at finer resolution over
the last epochs.

Lastly, we compared our method to a classic PINN approach that trains
specific 20 different neural networks for each of the separate 20
test initial conditions (see Subsection \ref{subsec:PINN}). These
networks receive as input a location on the sphere and time step and
provide as output an approximate solution for only the unique initial
condition they trained for. Each separate network has 1,467,324 weights
and is constructed using 10 dense inner layers. The average MSE of
the 20 networks was 4.9e-5 which is comparable to the MSE of our spectral
method provided in Table \ref{tab:Allen-Cahn-equation-spherical}.
Yet, our spectral network model is smaller and provides approximations
for any initial condition from the set $W$ with no additional training.
On the other hand, the classic PINNs can be trained for any initial
condition under weaker assumptions. The average training time for
each separate classic vanilla PINN was 42 minutes while for the spectral
PINN that can take as input any initial condition from $W$ it was
103 minutes.

\begin{figure}
\centering{}\includegraphics[width=8cm]{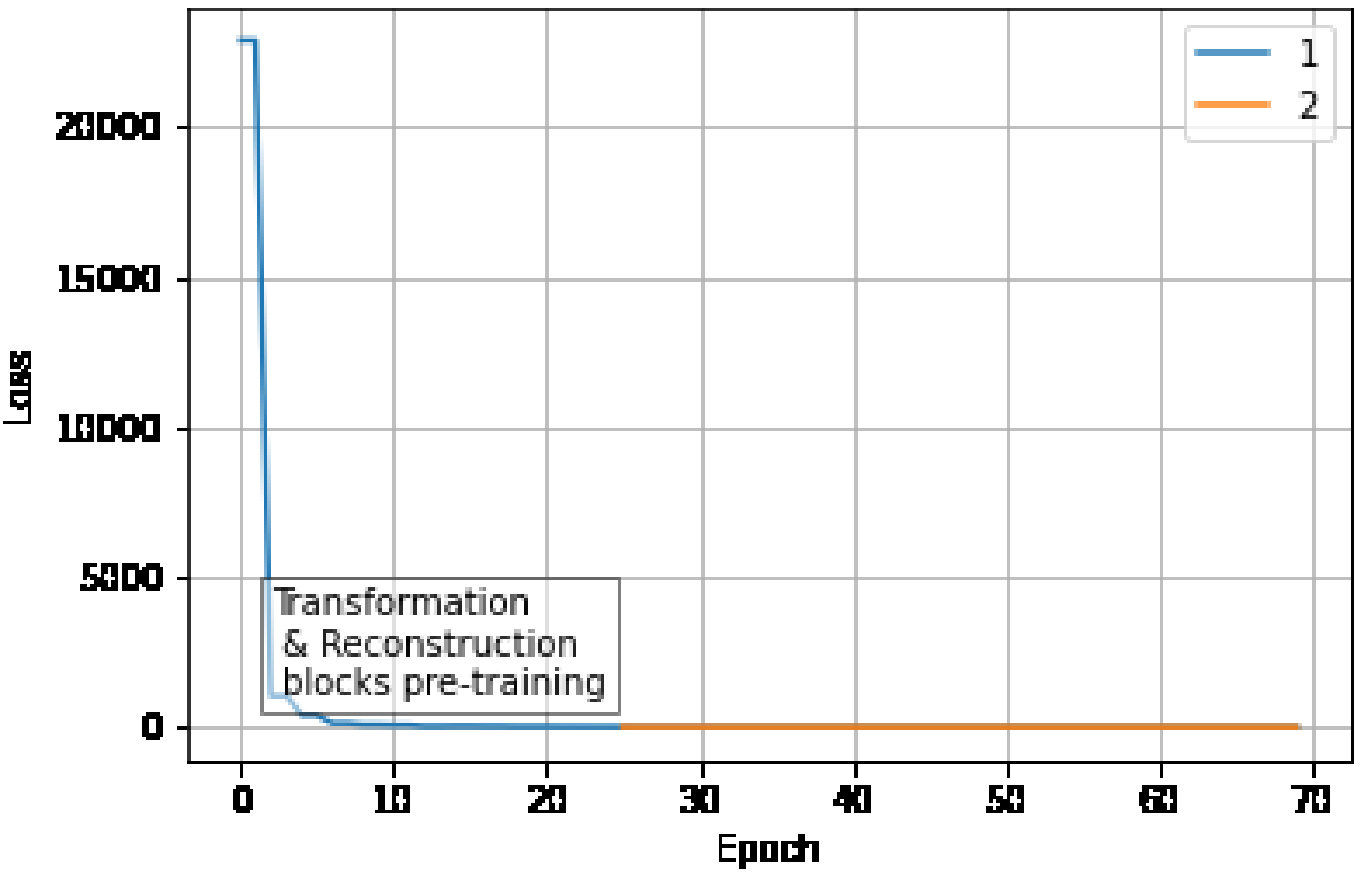}\includegraphics[width=8cm]{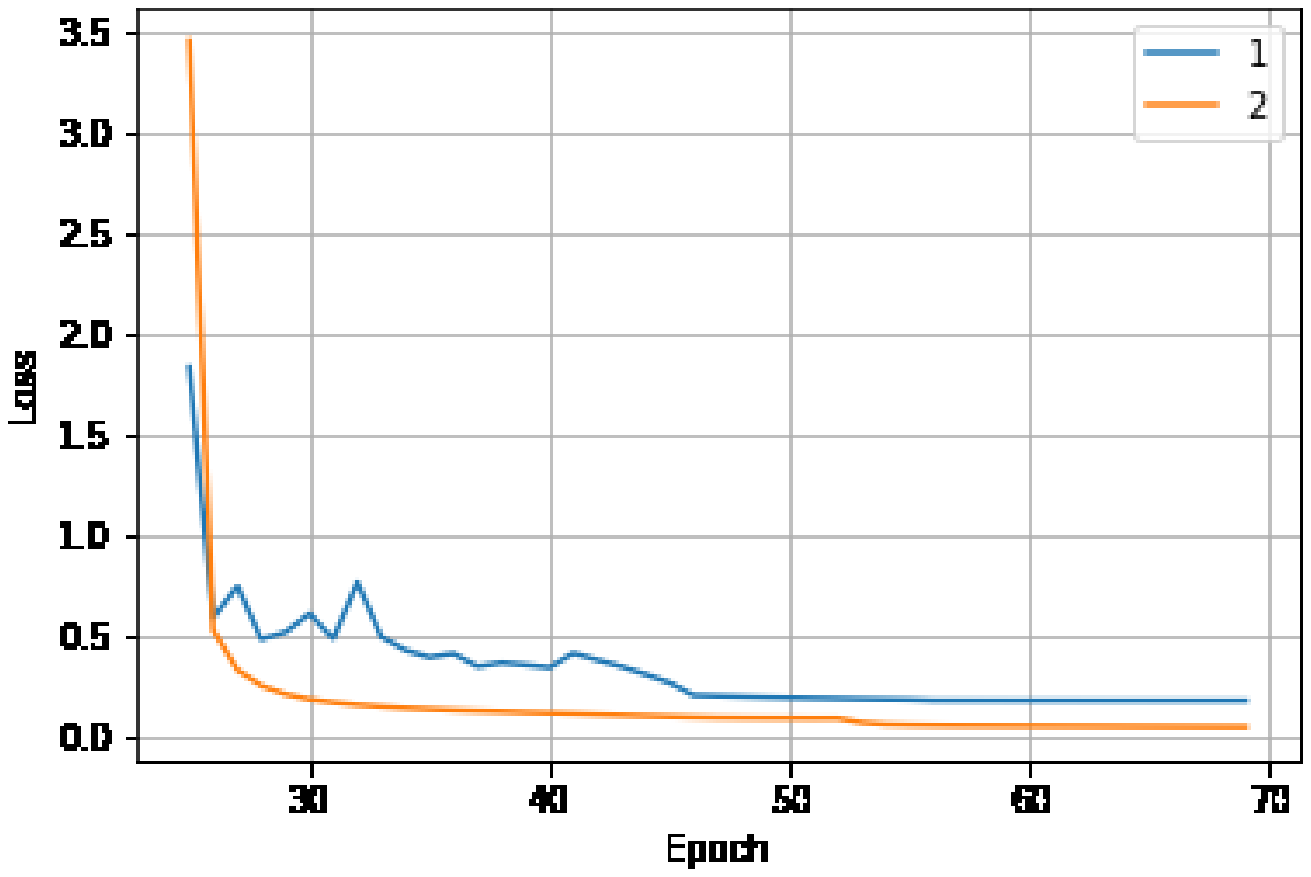}
\caption{\label{fig:sphere-train-time} Allen-Cahn equation over $\mathbb{S}^{2}$
with $\epsilon=0.1$ - Comparison of training loss over epochs}
\end{figure}

\section{The embedded torus $\mathbb{T}\subset\mathbb{R}^{3}$}

\label{sec:torus}

In this section, we demonstrate our method on the embedded torus 
\[
\mathbb{T}=\{((R+r\cos\theta)\cos\phi,(R+r\cos\theta)\sin\phi,r\sin\theta)|\theta,\phi\in[0,2\pi)\}\subset\mathbb{R}^{3}.
\]
In this setting, the Laplace-Beltrami operator is \cite{key-10} 
\[
\Delta_{\mathbb{T}}=\frac{1}{r^{2}}\frac{\partial^{2}}{\partial\theta^{2}}-\frac{\sin\theta}{r(R+r\cos\theta)}\frac{\partial^{2}}{\partial\theta}+\frac{1}{(R+r\cos\theta)^{2}}\frac{\partial^{2}}{\partial\phi^{2}}.
\]
On this manifold, there is no analytic form of the spectral basis
and so coefficients need to be approximated from given samples of
a function as we shall see below. In Figure \ref{fig:torus-basis}
we see a rendering taken from \cite{torus-fig} of the approximations
of some of the first elements of the spectral basis on the torus.

\begin{figure}
\centering{}\includegraphics[width=10cm]{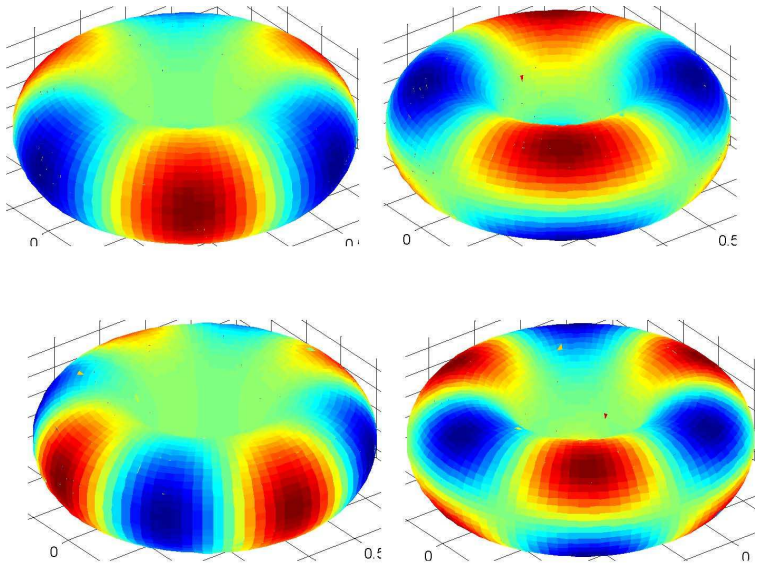}
\caption{\label{fig:torus-basis} Approximation of some of the first elements
of the spectral basis on the torus \cite{torus-fig}}
\end{figure}

On the torus we demonstrate our spectral PINN method again using the
Allen-Cahn equation \eqref{Allen-Cahn-sphere} with $\epsilon=0.1$.
As the class of initial conditions we use the set 
\[
W=\left\{ {\sum_{k=1}^{5}\sum_{l=1}^{5}c_{k,l}\sin(k\theta)\sin(l\phi),\quad\sqrt{\sum_{k.l=1}^{5}c_{k,l}^{2}}=1}\right\} .
\]
Note that in this case, the subset of initial conditions is not a
subspace of the manifold's spectral basis. We sample functions from
this set on a uniform parametric grid with $N_{\theta}=N_{\phi}=15$.
On the embedded torus one is required to use a numeric approximation
of the spectral basis and we used the finite-elements method implemented
in the python package SPHARAPY \cite{key-11}. The choice of spectral
basis implementation impacts the design of the architecture of the
transformation and reconstruction blocks. We test several options
for each block. For the transformation and reconstruction blocks we
consider two options: 
\begin{enumerate}
\item \textbf{Numerical Spectral basis blocks -} In this option, we first
create a dataset of 5,000 triples, each composed of a sampling matrix
of a function $f\in W$, on a uniform parametric grid with $N_{\theta}=N_{\phi}=15$
and two random coordinates $(\theta,\phi)\in[0,2\pi)^{2}$ of a point
on the torus. We then train the transformation and reconstruction
blocks separately as follows.

For the training of the transformation block $\mathcal{\tilde{C}}$
we further approximate for each function in the training set, using
its sampling matrix, the (numerical) spectral transformation using
SPHARAPY. The package numerically computes for each set of samples,
the first $K$ coefficients of the spectral basis. Thus, we applied
SPHARAPY with $K=225$ and used its output as ground truth to train
our transformation block. The block's architecture is composed of
3 convolution layers followed by one dense layer.

The reconstruction block $\mathcal{\tilde{R}}$ in this variant is
trained to take as input the coefficients of the spectral representation
and the coordinate $(\theta,\phi)\in[0,2\pi)^{2}$ and approximate
the ground truth function value at this coordinate. The block architecture
is a MLP subnet with 15 layers. 
\item \textbf{Auto-Encoder-Decoder blocks -} Auto-encoder-decoder architectures
are very popular in deep learning applications \cite{BKG}. Their
goal is to learn compact representation spaces of data. This is achieved
through two networks that are trained simultaneously. The encoder
network takes the input space of dimension $M$ and applies a nonlinear
transformation using several layers into a smaller representation
space of dimension $K<M$. The decoder network then takes the compressed
representation and trains to approximately recover the original $M$-th
dimensional data or a certain piece of information relating to the
original data. As we shall see, our application is the latter.

The motivation to use the concept of an encoder-decoder architecture
in our setting is to provide an alternative to the the complexity
of using numerical approximations of the spectral basis over manifolds,
by learning an alternative useful non-linear transformation into a
compact representation. Thus, we train a transformation block $\mathcal{\tilde{C}}$
as the encoder together with the reconstruction block $\mathcal{\tilde{R}}$
as a decoder, without using explicitly the spectral representation
on the torus. However, this approach is certainly inspired by the
spectral method as we are ultimately optimizing some compressed representation
space. The transformation encoder block simply learns to create a
compressed latent representation of dimension $K=150$ from the $M=225$
function samples in a representation space. The encoder's architecture
is composed of 5 convolution layers followed by one dense layer. Then
the decoder takes the compressed representation in dimension $K=150$
together with a coordinate $(\theta,\phi)\in[0,2\pi)^{2}$ and tries
to recover the ground truth function value at this coordinate. Its
architecture is 17 dense layers. The loss function over the training
set is then 
\[
\frac{1}{N}\sum_{i=1}^{N}\left|{\mathcal{\tilde{R}}(\mathcal{\tilde{C}}(\vec{F}_{i}),(\theta_{i},\phi_{i}))-f_{i}(\theta_{i},\phi_{i})}\right|^{2}.
\]

\end{enumerate}
For the time stepping block we test two options 
\begin{enumerate}
\item[(a)] A custom made time stepping block that receives as input the coefficients
of the initial condition as well as the coefficients of the nonlinear
part and a time step (similar to variant (a) of the time stepping
block in the spherical case from previous section). 
\[
(\tilde{\mathcal{C}}(\vec{F}),\tilde{\mathcal{C}}(\vec{F}-\vec{F}^{3}),t).
\]
Recall that such an architecture aims to be `more' physics aware and
adapted to the nature of the equation. For this variant of the time
stepping block we use 9 dense layers. 
\item[(b)] A network that takes as input 
\[
(\tilde{\mathcal{C}}(\vec{F}),t),
\]
without the nonlinear part. Here we used 15 dense layers. 
\end{enumerate}
We denote this block as earlier with $\tilde{\mathcal{D}}$. For testing
of our models, we used the IMEX-BDF4 numeric solver \cite{key-14}
to obtain approximations of solutions to the equations that we considered
as ground truth. In table \ref{tab:Different-model-results-torus}
we summarize the benchmarks of the various architectures and also
compare them to a naive PINN architecture, with 26 layers, that simply
takes in the samples of the initial condition as well as the time
step and location on the torus and outputs an approximation of the
value of the solution. 
\begin{table}
\begin{tabular}{|>{\centering}p{3cm}|>{\centering}p{4cm}|>{\centering}p{4cm}|c|c|}
\hline 
Model number in plots  & Transformation and Reconstruction blocks  & Time stepping block  & \#Weights  & MSE\tabularnewline
\hline 
1  & Naive Model  &  & 4,130,001  & 2.7e-4\tabularnewline
\hline 
2  & Numerical spectral basis blocks  & Spectral model, time stepping variant (a) - Input of Allen-Cahn nonlinear
part  & \textbf{2,564,105}  & \textbf{2.5e-5}\tabularnewline
\hline 
3  & Auto-encoder-decoder blocks  & Spectral model, time stepping variant (a) - Input of Allen-Cahn nonlinear
part  & 3,800,555  & 8.3e-5\tabularnewline
\hline 
4  & Numerical spectral basis blocks  & Spectral model, time stepping variant (b)  & 3,129,976  & 1.7e-4\tabularnewline
\hline 
\end{tabular}

\caption{\label{tab:Different-model-results-torus}Allen-Cahn equation over
$\mathbb{T}\subset\mathbb{R}^{3}$ - Comparison of standard naive
PINN model with 3 variants of our spherical PINN model}
\end{table}

We can observe that the best result, in terms of accuracy and smaller
size of the network, can be obtained using both numerical spectral
basis blocks as transformation and reconstruction blocks, combined
with the non-linear input time stepping block. Also, even the encoder-decoder
variant that `follows' the spectral paradigm to some extent without
actually using the numerical spectral basis, provides a better result
than the naive PINN model. In Figure \ref{fig:Error-versus-time-torus}
we show time plots of errors of the different PINN models averaged
over 20 random initial conditions. For the generalization test presented
in Table \ref{tab:torus-generalization}, the network that was trained
on samples from $W$ was tested on random initial conditions from
the larger set 
\[
W_{G}=\left\{ {\sum_{k=1}^{10}\sum_{l=1}^{10}c_{k,l}\sin(k\theta)\sin(l\phi),\quad\sqrt{\sum_{k.l=1}^{10}c_{k,l}^{2}}=1}\right\} .
\]
The stability tests listed in Table \ref{tab:torus- stability} are
averaged over 20 random initial conditions.

\begin{figure}
\begin{centering}
\includegraphics[width=12cm,height=8cm]{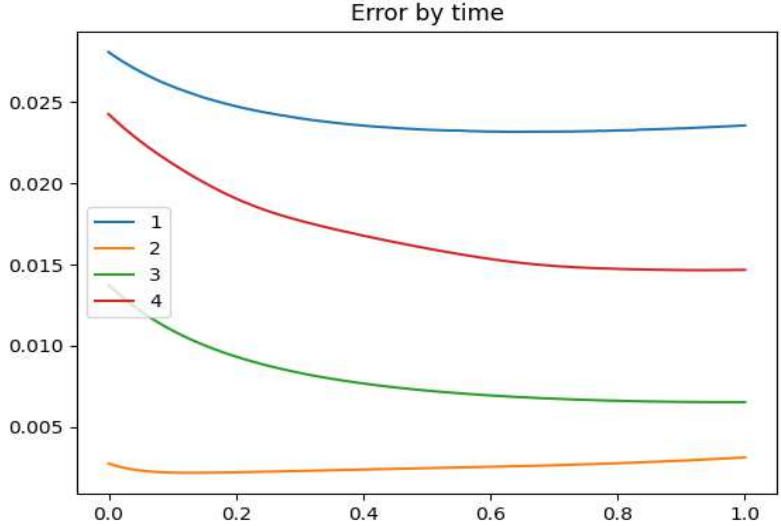} 
\par\end{centering}
\caption{\label{fig:Error-versus-time-torus}Allen-Cahn equation over $\mathbb{T}\subset\mathbb{R}^{3}$
- Error over time of the naive and spectral variant PINN models on
testing dataset}
\end{figure}

\begin{table}
\centering{}%
\begin{tabular}{|>{\centering}p{1.5cm}|>{\centering}p{6cm}|>{\centering}p{4cm}|c|}
\hline 
Model number

in plots  & Transformation and Reconstruction blocks  & Time stepping block  & MSE\tabularnewline
\hline 
1  & \multicolumn{2}{c|}{Naive Model} & 2.1e-4 \tabularnewline
\hline 
2  & Numerical spectral basis blocks  & Spectral model, time stepping variant (a) - Input of Allen-Cahn nonlinear
part  & \textbf{9.7e-5}\tabularnewline
\hline 
3  & Auto-encoder-decoder blocks  & Spectral model, time stepping variant (a) - Input of Allen-Cahn nonlinear
part  & 1.1e-4\tabularnewline
\hline 
4  & Numerical spectral basis blocks  & Spectral model, time stepping variant (b)  & 2.0e-4\tabularnewline
\hline 
\end{tabular}\caption{\label{tab:torus-generalization}Allen-Cahn equation over $\mathbb{T}\subset\mathbb{R}^{3}$
- generalization test results}
\end{table}

\begin{table}
\begin{tabular}{|>{\centering}p{1.5cm}|>{\centering}p{4cm}|>{\centering}p{4cm}|c|c|c|}
\hline 
Model number in plots  & Transformation and Reconstruction blocks  & Time stepping block  & $T=0.4$  & $T=0.7$  & $T=1.0$\tabularnewline
\hline 
1  & \multicolumn{2}{c|}{Naive Model} & 2.4  & 2.4  & 2.5 \tabularnewline
\hline 
2  & Numerical spectral basis blocks  & Spectral model, time stepping variant (a) - Input of Allen-Cahn nonlinear
part  & 0.66  & 0.60  & 0.63\tabularnewline
\hline 
3  & Auto-encoder-decoder blocks  & Spectral model, time stepping variant (a) - Input of Allen-Cahn nonlinear
part  & \textbf{{0.21} } & \textbf{{0.21} } & \textbf{{0.22}}\tabularnewline
\hline 
4  & Numerical spectral basis blocks  & Spectral model, time stepping variant (b)  & 0.91  & 0.83  & 0.79\tabularnewline
\hline 
\end{tabular}\caption{\label{tab:torus- stability}Allen-Cahn equation over $\mathbb{T}\subset\mathbb{R}^{3}$
- stability test results using the normalized metric \eqref{norm-metric}
and noise $\sim N(0,0.3)$}
\end{table}

Next we compare the training time required for the models. In Figure
\ref{fig:torus-train-time} we see the training loss over the training
epochs for the naive PINN model 1 and the spectral PINN variant 2.
The left hand zoom out plot takes into account the training epochs
used by the spectral PINN for the initial training of the transformation-reconstruction
subnetworks and then plots the MSE for the training of the full spectral
network. The right hand side shows the MSE at finer resolution over
the last epochs.

\begin{figure}
\centering{}\includegraphics[width=8cm]{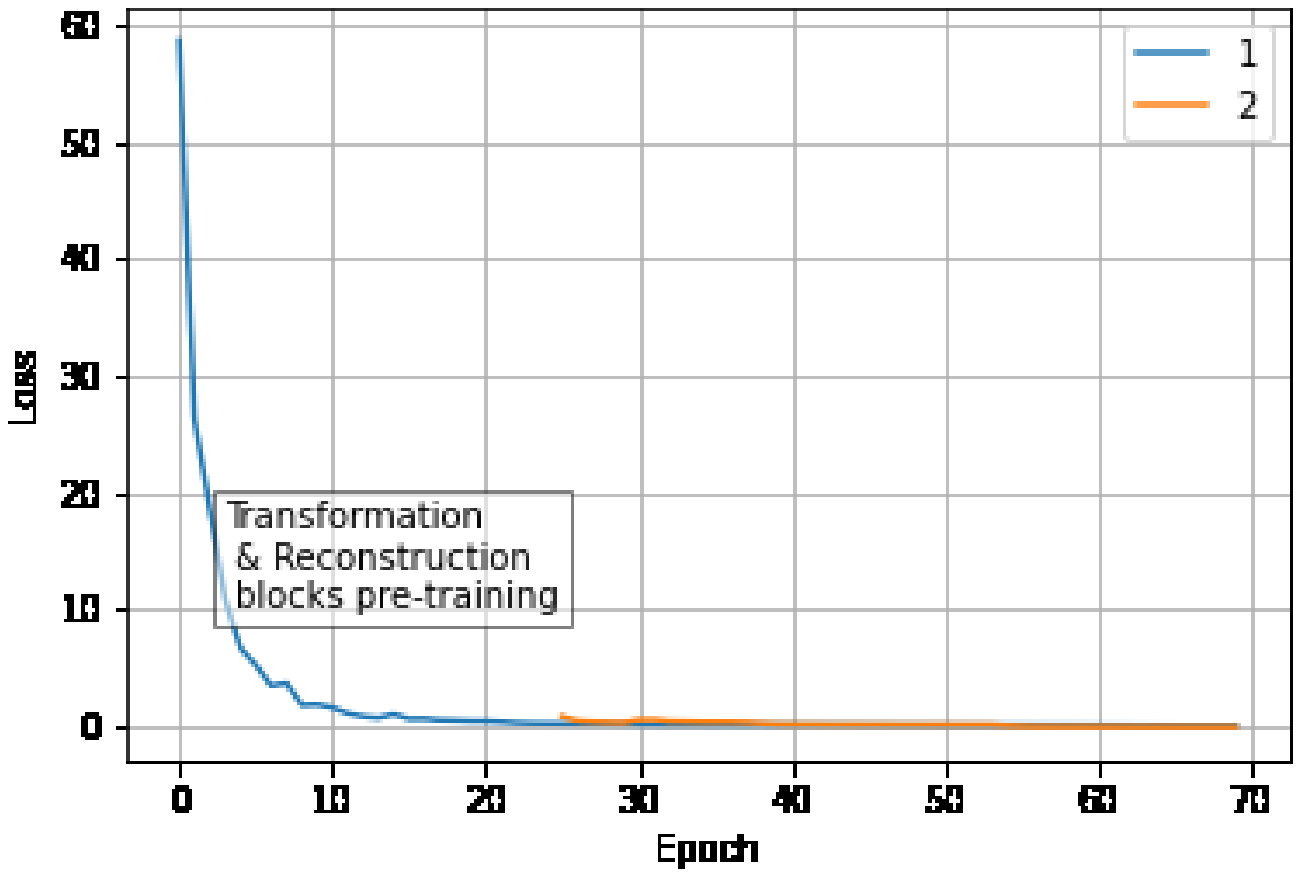}\includegraphics[width=8cm]{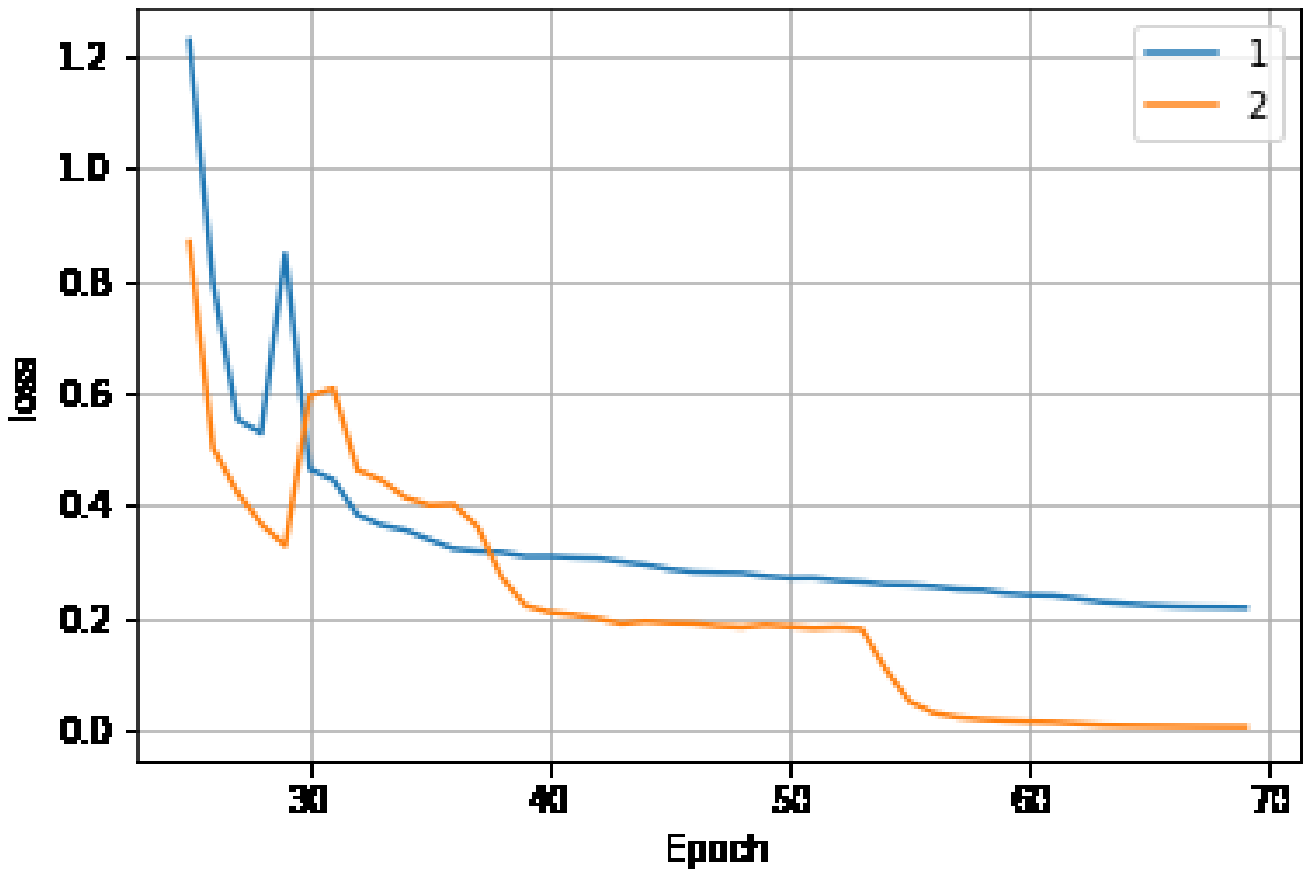}
\caption{\label{fig:torus-train-time} Allen-Cahn equation over $\mathbb{T}\subset\mathbb{R}^{3}$
- Comparison of training loss over epochs.}
\end{figure}

Finally, we tested the impact the spectral dimension $K$ has on the
accuracy and the training time. In Figure \ref{fig:SpecDimensionCompare}
we see how our spectral model (variant 2) improves with higher spectral
degrees as the training time increases. 
\begin{figure}
\begin{centering}
\includegraphics[width=10cm]{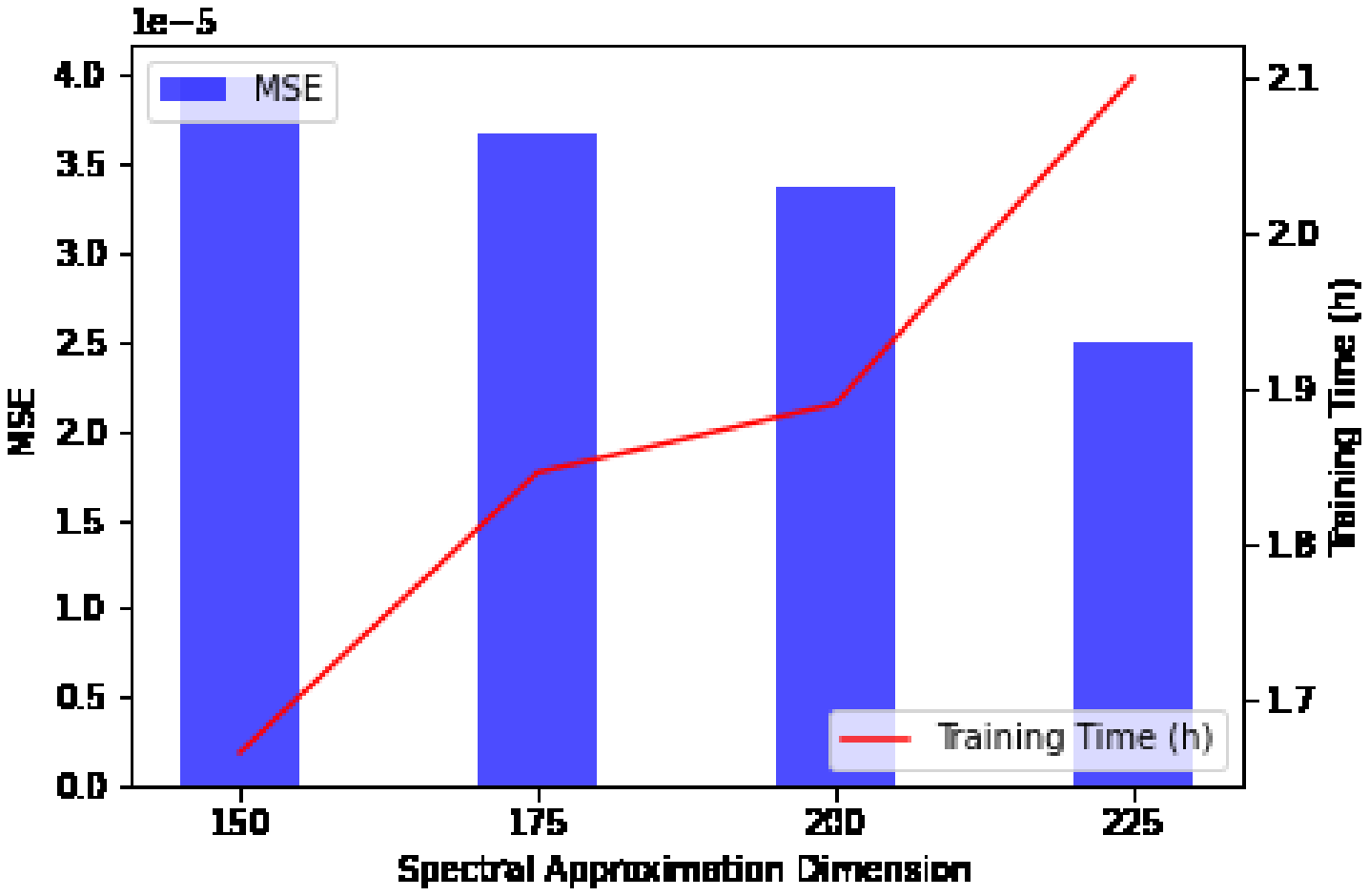} 
\par\end{centering}
\caption{\label{fig:SpecDimensionCompare}Allen-Cahn equation over $\mathbb{T}\subset\mathbb{R}^{3}$
- Comparisons of accuracy error and training time with different spectral
dimension $K$.}
\end{figure}

\section{Conclusions and future work}

In this work we presented a physics informed deep learning strategy
for building PDE solvers over manifolds which is aligned with the
method of spectral approximation. Our method allows to train a model
that can take as input initial conditions from a pre-determined subset
or subspace and is grid free. Our PI networks are designed to be aligned
with the powerful spectral methods, where on each manifold we employ
the appropriate spectral basis of the Laplace-Beltrami operator, or
an alternative encoder-decoder framework that simulates the `compression'
properties of the spectral basis. Through extensive experimentation
we empirically demonstrate that our spectral PINNs provide better
approximation with much less weights compared with standard PINN architectures.
For the case of the heat equation over the unit interval we provided
a rigorous proof for the degree of approximation of a spectral PINN
based on MLP components.

We believe that the work establishes the validity of our spectral
approach for interpolation, where the models are trained to take as
input any initial condition from the given subspace, any point on
the manifold and any time step. At this point we have not designed
and tested the models for extrapolation, such that training using
time steps from $t\in[0,0.5]$ and testing on $t\in(0.5,1]$. We shall
do so in future work. \\

\noindent \textbf{ACKNOWLEDGMENTS} The authors deeply thank the reviewers
for their numerous helpful comments and suggestions that have significantly
improved the paper.

\appendix

\section{Appendix}
\begin{proof}[Proof of theorem \ref{thm:2}]
To approximate $\mathcal{D}$, we first build two types of MLP sub-networks.
The first one $M$, approximates the multiplication 
\[
(x_{1},x_{2})\rightarrow x_{1}x_{2},\quad x_{1},x_{2}\in[-1,1].
\]
We also assume we have MLP sub-networks $E_{k}$, $1\le k\le K$,
that approximate 
\[
e^{-4\pi^{2}k^{2}\alpha t},\quad t\in[0,1].
\]
Our MLP network $\mathcal{\tilde{D}}$ can then be implemented as
the following feed forward composition 
\[
\begin{pmatrix}t\\
c_{1}\\
c_{2}\\
\vdots\\
c_{K}
\end{pmatrix}\rightarrow\begin{pmatrix}c_{1}\\
\vdots\\
c_{K}\\
E_{1}(t)\\
E_{2}(t)\\
\vdots\\
E_{K}(t)
\end{pmatrix}\rightarrow\begin{pmatrix}M(c_{1},E_{1}(t))\\
M(c_{2},E_{2}(t))\\
\vdots\\
M(c_{K-1},E_{K-1}(t))\\
M(c_{K},E_{K}(t))
\end{pmatrix}.
\]
Let us assume that our sub-networks satisfy 
\begin{equation}
|M(x_{1},x_{2})-x_{1}x_{2}|\le\frac{\epsilon}{2},\quad\forall x_{1},x_{2}\in[-1,1],\label{approx-M}
\end{equation}
\begin{equation}
|E_{k}(t)-e^{-2\pi^{2}k^{2}t}|\le\frac{\epsilon}{2},\quad1\le k\le K,,t\in[0,1].\label{approx-Ek}
\end{equation}
Then, 
\begin{align*}
|M(c_{k},E_{k}(t))-c_{k}\cdot e^{-4\pi^{2}k^{2}\alpha t}| & \leq|M(c_{k},E_{k}(t))-c_{k}\cdot E_{k}(t)|+|c_{k}\cdot E_{k}(t)-c_{k}\cdot e^{-4\pi^{2}k^{2}\alpha t}|\\
 & \leq\frac{\epsilon}{2}+|c_{k}|\frac{\epsilon}{2}\\
 & \leq\epsilon.
\end{align*}
This immediately implies that 
\[
\|\tilde{\mathcal{D}}(t,c_{0},...,c_{K})-\mathcal{D}(t,c_{0},...,c_{K})\|_{\infty}\le\epsilon.
\]
It remains to construct the MLP sub-networks $M$ and $E_{k}$, $1\le k\le K$.
Our main tool is Theorem 2.3 in \cite{key-15} which provides the
following special case. Assume $f:\mathbb{C}^{d}\rightarrow\mathbb{C}$
is analytic in the poly-ellipse defined for $\rho\ge1$ 
\[
E_{\rho}:=\left\{ {(z_{1},\dots,z_{d})\in\mathbb{C}^{d}:\quad\left|{z_{j}+\sqrt{|z_{j}|^{2}-1}}\right|\le\rho,\quad1\le j\le d}\right\} .
\]
Then, for any $\rho_{1}<\rho$ there exists a constant $c(\rho_{1},\rho)>0$,
such that for any $n\ge1$ there exist coefficients $\{a_{j}\}_{j=1}^{n}$,
vectors in $\mathbb{R}^{d}$, $\{v_{j}\}_{j=1}^{n}$ and a bias $b\in\mathbb{R}$,
such that 
\begin{equation}
\left\Vert {f-\sum_{j=1}^{n}a_{j}\tanh(v_{j}\cdot+b)}\right\Vert _{L_{\infty}[-1,1]^{d}}\le c\rho_{1}^{-n^{1/d}}\max_{z\in E_{\rho}}|f(z)|.\label{Thm2-3-M}
\end{equation}
The first application of this result, for the case $f(x_{1},x_{2})=x_{1}x_{2}$,
implies that for any $n\ge1$, there exists a subnetwork of two layers
$M_{n}$, with $O(n)$ parameters, which satisfies 
\[
\max_{x_{1},x_{2}\in[-1,1]}|x_{1}x_{2}-M_{n}(x_{1},x_{2})|\le ce^{-n^{1/2}}.
\]
Setting 
\[
\frac{c}{e^{n^{1/2}}}=\frac{\epsilon}{2},
\]
implies we should choose 
\[
n=\log^{2}\frac{2c}{\epsilon}.
\]
We conclude there exists a subnetwork $M$, with $O(\log^{2}(\epsilon^{-1}))$
weights that provides the approximation \eqref{approx-M}.

Similarly, for any $1\le k\le K$, we can apply \eqref{Thm2-3-M}
for $f(t)=e^{-2\pi^{2}k^{2}\alpha t}$, to obtain the estimate for
subnetworks with $n$ parameters $E_{k,n}$ 
\[
\max_{t\in[0,1]}|e^{-2\pi^{2}k^{2}\alpha t}-E_{k,n}(t)|\le ce^{2\pi^{2}k^{2}\alpha\rho-n},\quad1\le k\le K.
\]
Thus, we may construct subnetworks $\{E_{k}\}_{k=1}^{K}$ with $O(K^{2}+\log(\epsilon^{-1}))$
weights which provide the approximation \eqref{approx-Ek}.

We conclude that by assembling the subnetworks, we can construct the
network $\mathcal{\tilde{D}}$ with $O(K^{3}+K\log^{2}(\epsilon^{-1}))$
weights that provides the required approximation. 
\end{proof}
\begin{proof}[Proof of theorem \ref{thm:3}]
The technique of the proof is similar to the method of proof of Theorem
\ref{thm:2}. We use \eqref{Thm2-3-M} to obtain an estimate for subnetworks
$S_{k,n}$, $1\le k\le K$ each of $O(n)$ weights, satisfying 
\[
\max_{x\in[0,1]}|\sin(2\pi kx)-S_{k,n}(x)|\le ce^{2\pi k-n}.
\]
So, it is possible to construct subnetworks $S_{k}$, $1\le k\le K$,
each with $O(K+\log K+\epsilon^{-1})=O(K+\epsilon^{-1})$ weights
such that 
\[
\max_{x\in[0,1]}|\sin(2\pi kx)-S_{k}(x)|\le\frac{\epsilon}{2K}.
\]
We require a multiplication subnetwork $M$ as in the proof of Theorem
\ref{thm:2}. However, this time, assuming that for sufficiently small
$\epsilon>0$, the outputs of the subnets $\{S_{k}\}_{k=1}^{K}$ are
in $[-2,2]$. Since these are inputs to the multiplication network,
we construct a subnetwork with $O(\log^{2}(K\epsilon^{-1}))$ weights
that satisfies 
\begin{equation}
|M(x_{1},x_{2})-x_{1}x_{2}|\le\frac{\epsilon}{2K},\quad\forall x_{1},x_{2}\in[-2,2],.\label{approx-M-2}
\end{equation}
We assemble the subnetworks to construct an approximating MLP network
with $O(K^{2}+K\log^{2}(K\epsilon^{-1}))$ weights 
\[
\mathcal{\tilde{\mathcal{R}}}(a_{1},\dots,a_{K},x):=\sum_{k=1}^{K}M(a_{k},S_{k}(x)).
\]
Finally, we obtain the required estimate by 
\begin{align*}
 & \left|{\sum_{k=1}^{K}a_{k}\sin(2\pi kx)-\mathcal{\tilde{\mathcal{R}}}(a_{1},\dots,a_{K},x)}\right|=\left|{\sum_{k=1}^{K}a_{k}\sin(2\pi kx)-\sum_{k=1}^{K}M(a_{k},S_{k}(x))}\right|\\
 & \qquad\le\left|{\sum_{k=1}^{K}a_{k}\sin(2\pi kx)-\sum_{k=1}^{K}a_{k}S_{k}(x)}\right|+\left|{\sum_{k=1}^{K}a_{k}S_{k}(x)-\sum_{k=1}^{K}M(a_{k},S_{k}(x))}\right|\\
 & \qquad\le K\frac{\epsilon}{2K}+K\frac{\epsilon}{2K}\le\epsilon.
\end{align*}
\end{proof}
\begin{proof}[Proof of theorem \ref{thm:4}]
It is well known that the Fourier series has the `spectral approximation'
property. Namely, for a Sobolev function $g\in W_{2}^{r}[0,1]$, with
the Fourier expansion $g=\sum_{k=-\infty}^{\infty}\hat{g}(k)e^{2\pi ikx}$,
we can estimate the error of the truncated Fourier expansion: 
\begin{align*}
\left\Vert {g-\sum_{k=-K}^{K}\hat{g}(k)e^{2\pi ikx}}\right\Vert _{2}^{2} & =\left\Vert {\sum_{k=-\infty}^{\infty}\hat{g}(k)e^{2\pi ikx}-\sum_{k=-K}^{K}\hat{g}(k)e^{2\pi ikx}}\right\Vert _{2}^{2}\\
 & =\left\Vert {\sum_{|k|>K}\hat{g}(k)e^{2\pi ikx}}\right\Vert _{2}^{2}\\
 & =\sum_{|k|>K}|\hat{g}(k)|^{2}=\\
 & \leq\sum_{|k|>K}\left({\frac{|2\pi k|}{K}}\right)^{2r}|\hat{g}(k)|^{2}\\
 & \leq K^{-2r}\sum_{k=-\infty}^{\infty}|2\pi k|^{2r}|\hat{g}(k)|^{2}\\
 & =K^{-2r}\sum_{k=-\infty}^{\infty}|\widehat{g^{(r)}}(k)|^{2}=K^{-2r}\|g^{(r)}\|_{2}^{2}.
\end{align*}
Thus, for $g=\sum_{k=1}^{\infty}g_{k}\sin(2\pi kx)$, $g\in W_{2,[0,1]}^{r}$,
$\|g^{(r)}\|_{2}\leq1$, we obtain 
\[
\|g-\sum_{k=-K}^{K}\hat{g}(k)e^{2\pi ikx}\|_{2}\leq K^{-r}.
\]
This implies that for any initial condition function $f=\sum_{k=1}^{\infty}c_{k}\sin(2\pi kx)$,
$f\in W_{2,[0,1]}^{r}$, $\|f^{(r)}\|_{2}\leq1$ and $t\in[0,1]$,
we may approximate the solution $u(f,x,t)$ to the heat equation by
\[
\|u(f,\cdot,t)-u_{K}(\cdot,t)\|_{2}\leq K^{-r},\qquad u_{K}(x,t):=\sum_{k=1}^{K}c_{k}e^{-4\pi^{2}k^{2}t}\sin(2\pi kx).
\]
For the given $\epsilon$ and $r\ge1$, we select 
\[
K:=\left({\frac{3}{\epsilon}}\right)^{1/r},
\]
which gives 
\begin{equation}
\|u(f,\cdot,t)-u_{K}(\cdot,t)\|_{2}\le\frac{\epsilon}{3},\label{u-K-approx}
\end{equation}
uniformly for all initial conditions from our Sobolev ball and all
times $t\in[0,1]$. With this choice of $K$, we construct the following
two networks 
\begin{enumerate}
\item Using Theorem \ref{thm:2}, we may construct for $\tilde{\epsilon}:=\epsilon/(3K)$
a block $\tilde{\mathcal{D}}$ containing 
\[
O(K^{3}+K\log^{2}(\tilde{\epsilon}^{-1}))=O(K^{3}+K\log^{2}(K\epsilon^{-1}))=O(\epsilon^{-3/r}+\epsilon^{-1/r}\log^{2}(\epsilon^{-(1+1/r)}))
\]
weights that satisfies 
\begin{equation}
\|\mathcal{\tilde{\mathcal{D}}}(t,c_{1},...,c_{K})-\mathcal{D}(t,c_{1},...,c_{K})\|_{\infty}\leq\frac{\epsilon}{3K}.\label{D-tilde-est}
\end{equation}
\item Based on \eqref{D-tilde-est}, for sufficiently small $\epsilon>0$,
the output of $\tilde{\mathcal{D}}$ is a vector in $[-2,2]^{K}$,
since it approximates the output of $\mathcal{D}$ which is a vector
in $[-1,1]^{K}$. Using Theorem \ref{thm:3} we may construct a block
$\tilde{\mathcal{R}}$ containing 
\[
O(K^{2}+K\log^{2}(K\epsilon^{-1}))=O(\epsilon^{-2/r}+\epsilon^{-1/r}\log^{2}(\epsilon^{-(1+1/r)}))
\]
weights that satisfies 
\[
|\mathcal{\tilde{\mathcal{R}}}(a_{1},....,a_{K},x)-\mathcal{R}(a_{1},....,a_{K},x)|\leq\frac{\epsilon}{3},\qquad\forall a_{k}\in[-2,2],1\le k\le K,\quad x\in[0,1].
\]
\end{enumerate}
Our approximating solution is then defined by $\tilde{u}(f,x,t):=\mathcal{\tilde{\mathcal{R}}}(\tilde{\mathcal{D}}(t,c_{1},\dots,c_{K}),x)$,
where the network contains a total of $O(\epsilon^{-3/r}+\epsilon^{-1/r}\log^{2}(\epsilon^{-(1+1/r)}))$
weights. We can estimate the approximation by 
\begin{equation}
\|u(f,\cdot,t)-\tilde{u}(f,\cdot,t)\|_{2}\leq\|u(f,\cdot,t)-u_{K}(\cdot,t)\|_{2}+\|u_{K}(\cdot,t)-\tilde{u}(f,\cdot,t)\|_{2}.\label{triangle-U-K}
\end{equation}
Applying \eqref{u-K-approx} provides the bound $\epsilon/3$ for
the first right hand side term in \eqref{triangle-U-K}. We now proceed
to bound the second term by $2\epsilon/3$ using the the simple inequality
$\|g\|_{L_{2}[0,1]}\le\|g\|_{L_{\infty}[0,1]}$. To this end for any
$x\in[0,1]$ 
\begin{align*}
 & |u_{K}(x,t)-\tilde{u}(f,x,t)|=\left|{\mathcal{R}(\mathcal{D}(t,c_{1},...,c_{K}),x)-\tilde{\mathcal{R}}(\tilde{\mathcal{D}}(t,c_{1},...,c_{K}),x)}\right|\\
 & \qquad\le\left|{\mathcal{R}(\mathcal{D}(t,c_{1},...,c_{K}),x)-\mathcal{R}(\tilde{\mathcal{D}}(t,c_{1},...,c_{K}),x)}\right|+\left|{\mathcal{R}(\tilde{\mathcal{D}}(t,c_{1},...,c_{K}),x)-\tilde{\mathcal{R}}(\tilde{\mathcal{D}}(t,c_{1},...,c_{K}),x)}\right|\\
 & \qquad\le K\frac{\epsilon}{3K}+\frac{\epsilon}{3}=\frac{2\epsilon}{3}.\\
\end{align*}
\end{proof}


\begin{thebibliography}{10}
\bibitem{key-1} B. Gustafsson, H. Kreiss \& J. Oliger, Time dependent
problems and difference methods, John Wiley \& Sons, 1995.

\bibitem{key-2} A. Tveito \& R. Winther, Introduction to partial
differential equations: a computational approach, Springer Science
\& Business Media, 2004.

\bibitem{key-3} L. Bar, \& N. Sochen, Unsupervised deep learning
algorithm for PDE-based forward and inverse problems, arXiv preprint,
2019.

\bibitem{key-4} M. Raissi, P. Perdikaris \& G. Karniadakis, Physics-informed
neural networks: A deep learning framework for solving forward and
inverse problems involving nonlinear partial differential equations.
Journal of Computational physics 378 (2019), 686-707.

\bibitem{key-5} O. Ovadia, A. Kahana, E. Turkel \& S. Dekel, Beyond
the Courant-Friedrichs-Lewy condition: Numerical methods for the wave
problem using deep learning, Journal of Computational Physics 442
(2021), 110493.

\bibitem{key-6}https://www.tensorflow.org/guide

\bibitem{key-7}https://pytorch.org/tutorials/

\bibitem{key-8} A. Grigoryan, Heat kernel and analysis on manifolds,
American Mathematical Soc. 47, 2009.

\bibitem{key-9} K. Atkinson \& W. Han, Spherical harmonics and approximations
on the unit sphere: an introduction, Springer Science \& Business
Media, 2012.

\bibitem{key-10} H. Volkmer, The Laplace-Beltrami operator on the
embedded torus, Journal of Differential Equations 271 (2021), 821-848.

\bibitem{key-11} U. Graichen, R. Eichardt \& J. Haueisen, SpharaPy:
A Python toolbox for spatial harmonic analysis of non-uniformly sampled
data, SoftwareX 10 (2019)

\bibitem{key-12} X. Glorot, A. Bordes, Y. \& Bengio, Deep sparse
rectifier neural networks, In Proceedings of the fourteenth international
conference on artificial intelligence and statistics, 2011, 315-323.

\bibitem{key-13} H. Montanelli, H. Yang \& Q. Du, Deep ReLU networks
overcome the curse of dimensionality for bandlimited functions, Journal
of computational mathematics 39 (2021), 801-815.

\bibitem{key-14} H. Montanelli \& Y. Nakatsukasa, Fourth-order time-stepping
for stiff PDEs on the sphere, SIAM Journal on Scientific Computing
40 (2018), A421-A451.

\bibitem{key-15} H. Mhaskar, Neural networks for optimal approximation
of smooth and analytic functions, Neural computation 8 (1996), 164-177.

\bibitem{key-16} M. Bronstein, J. Bruna, Y. LeCun, A. Szlam \& P.
Vandergheynst, Geometric deep learning: going beyond euclidean data,
IEEE Signal Processing Magazine 34 (2017), 18-42.

\bibitem{key-18} J. Driscoll \& D. Healy, Computing Fourier transforms
and convolutions on the 2-sphere, Advances in applied mathematics
15 (1994), 202-250.

\bibitem{key-19} A. Baydin, B. Pearlmutter, A. Radul \& J. Siskind,
Automatic differentiation in machine learning: a survey. Journal of
Marchine Learning Research 18 (2018), 1-43.

\bibitem{key-20} A. Kashefi \& T. Mukerji, T. (2022), Physics-Informed
PointNet: A Deep Learning Solver for Steady-State Incompressible Flows
and Thermal Fields on Multiple Sets of Irregular Geometries, arXiv
preprint, 2022.

\bibitem{key-21} G. Karniadakis, I. Kevrekidis, L. Lu, P. Perdikaris,
S. Wang \& L. Yang, Physics-informed machine learning, Nature Reviews
Physics 3 (2021), 422-440.

\bibitem{key-22}S. Zafeiriou, M. Bronstein, T. Cohen, O. Vinyals,
L. Song, J. Leskovec \& M. Gori, Non-Euclidean Machine Learning, IEEE
Transactions on Pattern Analysis and Machine Intelligence 44 (2022),
723-726.

\bibitem{key-23} R. DeVore \& G. Lorentz, Constructive approximation,
Springer Science \& Business Media, 1993.

\bibitem{key-24} Z. Li, N. Kovachki, K. Azizzadenesheli, A. Stuart
\& A. Anandkumar, Fourier neural operator for parametric partial differential
equations, in Proc. ICLR (2020), 1-16.

\bibitem{key-25} M. Xia, L. B�ttcher \& T. Chou, Spectrally Adapted
Physics-Informed Neural Networks for Solving Unbounded Domain Problems,
arXiv preprint, 2022.

\bibitem{key-26} M. Rafio, G. Rafio \& G. Sang Choi, DSFA-PINN: Deep
Spectral Feature Aggregation Physics Informed Neural Network, IEEE
Access 10 (2022), 22247-22259.

\bibitem{key-27} B. L�tjens, C. Crawford, M. Veillette \& D. Newman,
Spectral PINNs: fast uncertainty propagation with physics-informed
neural networks, DLDE Workshop, NeurIPS 2021.

\bibitem{LuLu} L. Lu, P. Jin, G. Pang, Z. Zhang \& G. Karniadakis,
Learning nonlinear operators via DeepONet based on the universal approximation
theorem of operators, Nature Machine Intelligence 3 (2021), 218-229.

\bibitem{torus-fig} S. Chen, M. Chib \& J. Wub, High-order algorithms
for solving eigenproblems over discrete surfaces, arXiv preprint arXiv:1310.4807,
2013.

\bibitem{MQS} B. Meuris, S. Qadeer \& P. Stinis, Machine-learning-based
spectral methods for partial differential equations, Scientific Reports
13 (2023), 1739.

\bibitem{BKG} D. Bank, N. Koenigstein \& R. Giryes, Autoencoders,
arXiv, https://arxiv.org/pdf/2003.05991.pdf.
\end{thebibliography}
\end{document}